\newcommand{\cmark}{\ding{51}}
\newcommand{\xmark}{\ding{55}}
\definecolor{mycolor1}{rgb}{0.82,0.70,0.54}
\definecolor{mycolor2}{rgb}{0.0,0.51,0.22}
\definecolor{mycolor3}{rgb}{0.80, 0.48, 0.37}
\definecolor{mycolor4}{rgb}{0.02, 0.33, 0.68}
\definecolor{mycolor5}{rgb}{0.86, 0.11, 0.11}
\newtheorem{lemma}{Lemma}
\newtheorem{theorem}{Theorem}
\title{\texttt{COME}: Test-time adaption by Conservatively Minimizing Entropy}
\author{Qingyang~Zhang\textsuperscript{1}, Yatao Bian\textsuperscript{2}, Xinke Kong\textsuperscript{1}, Peilin Zhao\textsuperscript{2} and Changqing Zhang\textsuperscript{1}
\\
College of Intelligence and Computing, Tianjin University\textsuperscript{1}
\\
Tencent AI Lab\textsuperscript{2}
}
\begin{document}

\maketitle

\begin{abstract}
Machine learning models must continuously self-adjust themselves for novel data distribution in the open world. As the predominant principle, entropy minimization (EM) has been proven to be a simple yet effective cornerstone in existing test-time adaption (TTA) methods. While unfortunately its fatal limitation (i.e., overconfidence) tends to result in model collapse. For this issue, we propose to \textbf{\texttt{Co}}nservatively \textbf{\texttt{M}}inimize the \textbf{\texttt{E}}ntropy (\texttt{COME}), which is a simple drop-in replacement of traditional EM to elegantly address the limitation. In essence, \texttt{COME} explicitly models the uncertainty by characterizing a Dirichlet prior distribution over model predictions during TTA. By doing so, \texttt{COME} naturally regularizes the model to favor conservative confidence on unreliable samples. Theoretically, we provide a preliminary analysis to reveal the ability of \texttt{COME} in enhancing the optimization stability by introducing a data-adaptive lower bound on the entropy. Empirically, our method achieves state-of-the-art performance on commonly used benchmarks, showing significant improvements in terms of classification accuracy and uncertainty estimation under various settings including standard, life-long and open-world TTA, i.e., up to $34.5\%$ improvement on accuracy and $15.1\%$ on false positive rate.
\end{abstract}
\section{Introduction}
Endowing machine learning models with self-adjust ability is essential for their deployment in the open world, such as autonomous vehicle control and embodied AI systems. To this end, test-time adaption (TTA) emerges as a promising strategy to enhance the performance in the open world which often encounters unexpected noise or corruption (e.g., data from rainy or snowy weather). Unsupervised losses play a crucial role in model adaptation, which can improve the accuracy of a model on novel distributional test data without the need for additional labeled training data. The representative strategy entropy minimization (EM) adapts classifiers by iteratively increasing the probabilities assigned to the most likely classes, and is an integral part in the state-of-the-art TTA methods~\citep{press2024entropy,wang2020tent, zhang2022memo, niu2022efficient, wang2022continual,iwasawa2021test,niu2023towards,yang2024test}. 
The initial intuition behind using entropy minimization, given by~\citep{wang2020tent} is based on the observation that models tend to be more accurate on samples for which they make predictions with higher confidence. The natural extension of this observation is to encourage models to bolster the confidence on test samples.

However, this intuition may not always be true since there always exists irreducible uncertainty which arises from the natural complexity of the data or abnormal outliers. Naturally, one might expect a machine learning model to adapt itself to test data and favor higher confidence on right prediction, but of course not absolute certainty for the erroneous. This contradiction challenges the suitability of EM in TTA tasks, which greedily pursues low-entropy on all test samples. A notable example in recent research concerns that EM can be highly unstable and frequently lead to model collapse when the models encounter unreliable samples in the wild~\citep{niu2023towards}. In this work, we hypothesize that due to the nature of EM, previous TTA methods tend to be highly overconfident ignoring the reliability of various test samples, which further results in the unsatisfactory performance.

For the above issues, we propose a simple yet effective model-agnostic learning principle, termed \textbf{Co}nservatively \textbf{M}inimizing \textbf{E}ntropy (\textbf{\texttt{COME}}) to stabilize TTA. We first consider the model output as \textit{opinion} which explicitly models the uncertainty of each sample from a Theory of Evidence perspective. Then, we encourage the model to favor definitive opinions for TTA and meanwhile take the uncertainty information into consideration. This offers two-fold advantages compared to EM learning principle. First, our \texttt{COME} leverages subjective logic~\citep{jsang2018subjective}, which is an off-the-shelf uncertainty tool in Bayesian toolbox to effectively perceive the uncertainty raised upon varying test samples without altering the original model architecture or training strategy. Second, when encountering unreliable outliers, the model is regularized to favor conservative confidence and be able to explicitly express \textit{"I do not know"}, i.e., reject to classify them to any known classes, which meets our expectation on model trustworthiness. Theoretically, our \texttt{COME} takes inspiration from Bayesian framework, and can be proved to correspond with a data-adaptive upper bound on the model confidence, which is a desirable property for TTA where the reliability of test samples are often varying from time. The contributions of this work are summarized as follows:
\begin{itemize}
\item As a principled alternative beyond entropy minimization, we propose a simple yet effective driven strategy for test-time adaption called Conservatively Minimizing Entropy (\texttt{COME}) which improves previous methods by exploring and exploiting the uncertainty.
\item We provide theoretical analysis with insight in contrast to EM, the model confidence of our \texttt{COME} is provably upper bounded in a data-adaptive manner, which enables TTA methods to focus on reliable samples and conservatively handle abnormal test samples.
\item We perform extensive experiments under various settings, including standard, open-world and lifelong TTA, where the proposed \texttt{COME} achieves excellent performance in terms of both classification accuracy and uncertainty quantification.
\end{itemize}

\section{Related work}
\textbf{Test-time adaption} aims to bridge the gaps between source and target domains during test-time without accessing the training-time source data. The model could be adapted by performing the unsupervised task on test samples. \textbf{Entropy minimization} performs an important role in test-time adaption, which has been integrated as a part of numerous TTA methods~\citep{press2024entropy,wang2020tent,niu2022efficient,wang2022continual,iwasawa2021test,yang2024test,chen2022contrastive}. However, it has been observed that the performance of EM can be highly sub-optimal and unstable when encounter unreliable environments. To this end, previous works incorporate many strategies including i) Samples selection, which selectively filter out the unreliable samples before adapting the model to them. For example,~\citep{iwasawa2021test,niu2023towards} manually set an entropy threshold and reject the high-entropy samples before model adaption. ii) Constrained optimization, which heuristically enforces that the updated parameters do not diverge too much compared to the original pretrain model during adaption. iii) Model recovery, which lively monitor the state of the adapting model and frequently reset it when detecting performance collapse~\citep{niu2023towards,wang2022continual}. Although these strategies have shown promising performance, the underlying reasons of the EM's sub-optimal performance are still largely unexplored. In contrast, this work aims to handle the inherent issues of EM overlooked by the existing studies and validate the necessity and effectiveness in various TTA settings.

\textbf{Uncertainty quantification} is one key aspect of the model reliability, which aims to quantitatively characterize the probability that predictions will be correct. With accurate uncertainty estimation ability, further processing can be taken to improve the performance of machine learning systems (e.g., human assistance) when the predictive uncertainty is high. This is especially useful in high-stake scenarios such as medical diagnosis~\citep{wang2023uncertainty}. To obtain the uncertainty, Bayesian neural networks (BNNs)~\citep{denker1990transforming,mackay1992bayesian} have been proposed to replace the deterministic weight parameters of model with distribution. Unlike BNNs, ensemble-based methods obtain the epistemic uncertainty by training multiple models and ensembling them~\citep{rahaman2021uncertainty, abe2022deep}. Uncertainty quantification has been successfully equipped to model the trustworthiness of varying environments in many fields such as multimodal learning~\citep{han2022trusted, zhang2023provable} and reinforcement learning~\citep{li2021mural,kalweit2017uncertainty}. In this paper, we focus on estimating and exploiting uncertainty under the theory of subjective logic (SL,~\citep{jsang2018subjective}). Unlike BNNs or ensemble, SL explicitly models the uncertainty in a single forward pass without modifying the training strategy or model architecture, which meets our expectation of computational effectiveness for TTA tasks.

\section{Motivation}
We consider the fully test-time adaption setting in $K$-classification task where $\mathcal{X}$ is the input space and $\mathcal{Y}=\{1,2,...,K\}$ denotes the target space. Given a classifier $f:\mathcal{X}\rightarrow \mathbb{R}^K$ parameterized by $\theta$ which has been pretrained on training distribution $P^{\rm train}$, our goal is to boost $f$ by updating its parameters $\theta$ online on each batch of test data drawn from test distribution $P^{\rm test}$. Note that in fully TTA setting, the training data $P^{\rm train}$ is inaccessible and one can only tune $\theta$ on unlabeled test data. This is derived from realistic concerns of privacy, bandwidth or profit.
\textbf{Entropy minimization (EM)} algorithm iteratively optimizes the model to minimize the predictive entropy on test sample $x$
\begin{equation}
    H(p(y|x))=-\sum_{k=1}^K p(y=k|x)\log p(y=k|x),
\end{equation}  
where $p(y|x)$ is the class distribution calculated by normalizing the output logits $f(x)$ with softmax function, i.e.,  $p(y=k|x)=\frac{\exp f_k(x)}{\sum \exp f(x)}$. $H$ is the Shannon's entropy.

\textbf{Other learning objectives.} Besides EM, there also exists several TTA methods which explore other unsupervised learning objectives. Notable examples include 1) Pseudo label (PL): $\mathcal{L}_{\rm PL}=-\mathbb{E} \log p(y=\hat{y}|x)$ which encourages the adapted model to fit the pseudo label $\hat{y}$ predicted by the pretrained model, 2) Module adjustment (T3A) which adjusts the parameters in the last fully connected layer, and can be viewed as an implicit way to minimize entropy~\citep{iwasawa2021test}, 3) Energy minimization: 
$\mathcal{L}_{\rm TEA}=-\mathbb{E} \log\sum_{k=1}^K \exp f(x)$ which aims to minimize the free energy during adaption, and takes inspiration from energy model~\citep{yuan2023energy}, 4) Contrastive learning objective: $\mathcal{L}_{\rm infoNCE}=-\log \frac{\exp {\rm query}\cdot {\rm key^{+}}}{\sum \exp {\rm query}\cdot {\rm key}}$ which strives to minimize the cosine distance between the query and positive samples ($\rm key^{+}$) while maximizing the cosine distances between query and negative samples~\citep{chen2022contrastive}, 5) The recent advanced FOA~\citep{niu2024test} which uses evolution strategy to minimize the test-training statistic discrepancy and model prediction entropy.

\textbf{The overconfident issue of EM.} We begin by testing EM in standard TTA setting, and put forward the following observations to detail its unsatisfying performance.

\begin{figure}[htbp]
    \centering
    \includegraphics[width=0.99\textwidth]{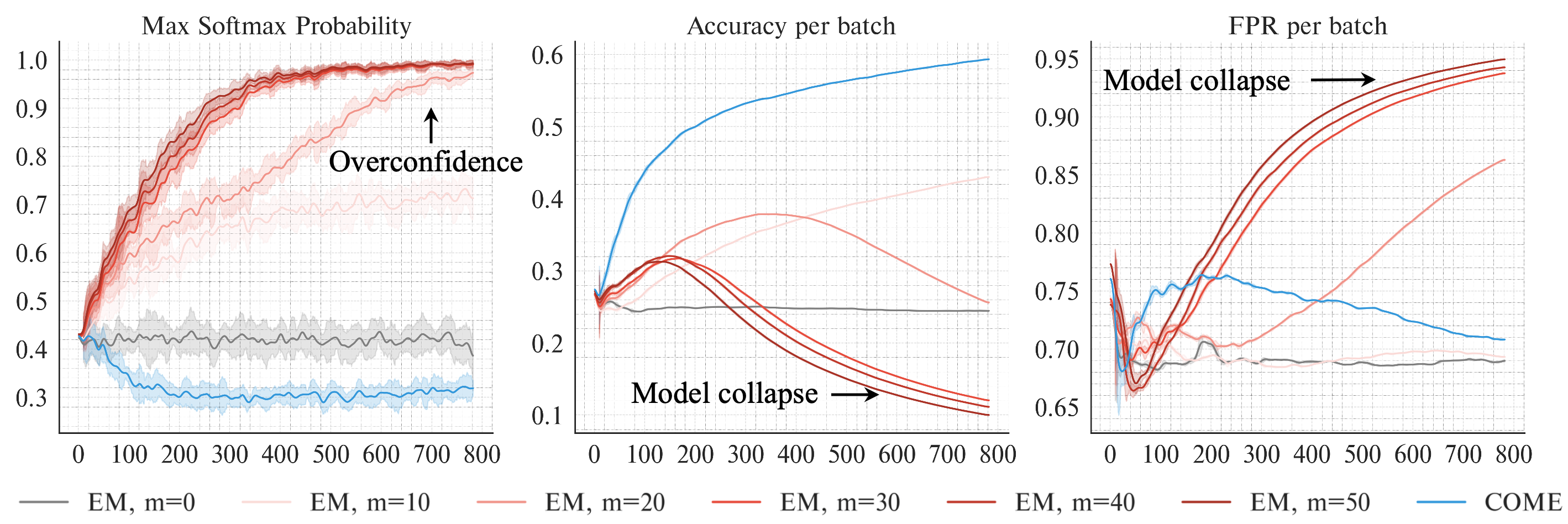}
    \caption{Empirical observations of Entropy Minimization when equipped to Tent~\citep{wang2020tent}. Along the TTA process, the uncertainty of models tuned with EM quickly drops, and the false positive rate decreases temporarily for a very short time horizon before quickly increasing. Along the same adaption trajectory, the model accuracy also improves for a short time compared to the initial model and then quickly decreases, after which the model collapses to a trivial solution. We manually tune an entropy threshold to filter out a proportion of (100-m)\% unreliable samples with highest entropy and only conduct entropy minimization on the rest m\% low-entropy samples. However, the resultant methods still suffer from aforementioned issues. Therefore, we believe that the entropy minimization learning principle is inherently problematic in TTA, which necessitates a more principled solution.}
    \label{fig:motivation1}
\end{figure}

\begin{figure}[htbp]
    \centering
    \includegraphics[width=0.99\textwidth]{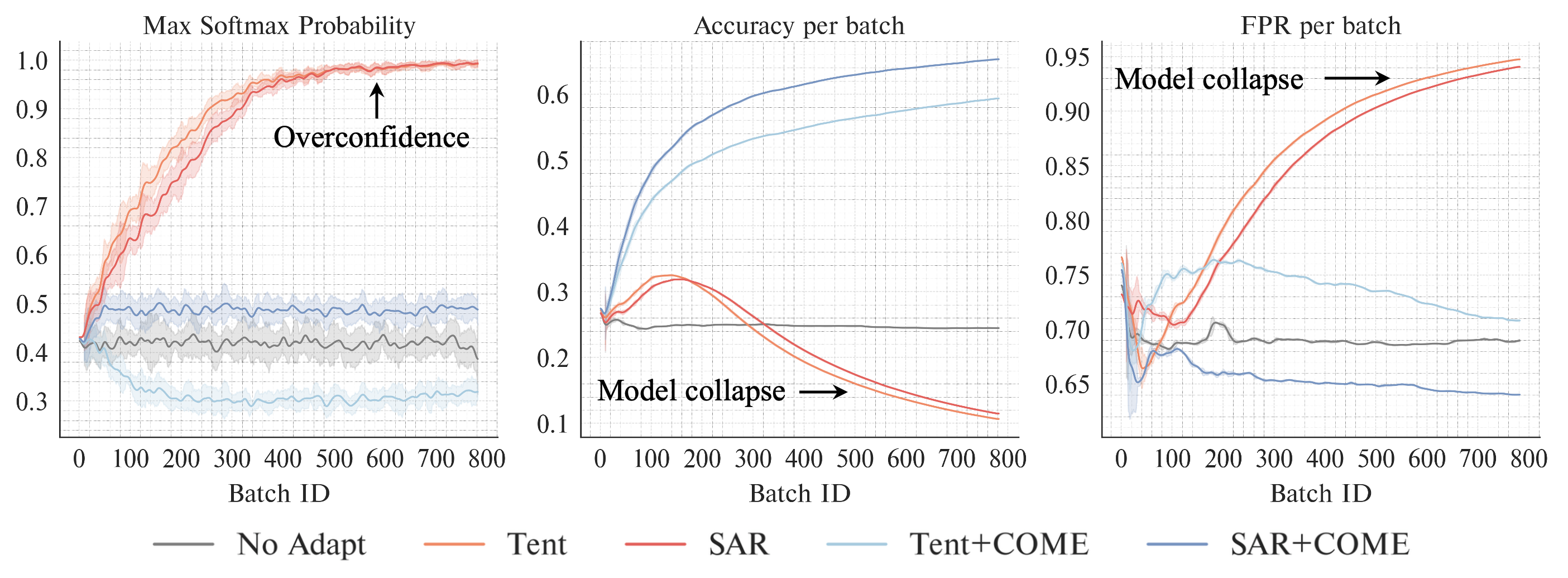}
    \caption{Comparison on two representative TTA methods, i.e., the seminal Tent~\citep{wang2020tent} and recent SOTA SAR~\citep{niu2023towards}. By contrast to EM, our \texttt{COME} establishes a stable TTA process with consistently improved classification accuracy and false positive rate. Besides, the model confidence of our \texttt{COME} is much more conservative, which implies fewer risks of overconfidence and a more accurate uncertainty awareness.}
    \label{fig:motivation2}
    \vspace{-10pt}
\end{figure}

As shown in Figure~\ref{fig:motivation1}, after TTA, EM tends to give overconfident prediction and assign extremely high probability to one certain class. While we believe that tuning the model to favor confident prediction can boost classification accuracy, the unlimited low entropy on all test samples is obviously a rather undesired characteristic. The most straightforward way to overcome this limitation is to filter out unreliable samples. However, similar issues remain in the resultant methods. This urges the demand of a more principled solution demonstrated in Figure~\ref{fig:motivation2}, where the proposed \texttt{COME} explicitly models uncertainty and regularizes the model for conservative predictive confidence during TTA. We test on ImageNet-C under snow corruption of severity level 5 as a typical showcase, and refer interested readers to Appendix~\ref{visualization} for more similar results.

\section{Methodology}
We propose to conservatively minimize the entropy under uncertainty modeling, a simple alternative to EM algorithm. The key idea of \texttt{COME} is to quantify and then regularize the uncertainty during TTA without altering the model architecture or training strategy, which avoids the overconfident nature of EM at minimal cost. We first introduce uncertainty quantification by the theory of evidence which is the the fundamental block of \texttt{COME} and then present how to regularize the uncertainty during TTA.

\subsection{Modeling Uncertainty by the Subjective Logic}
To overcome the greediness of EM, we need to effectively perceive the trustworthiness of diverse test samples firstly. Given a well trained classifier $f:\mathcal{X}\rightarrow \mathcal{Y}$, the most simple way to quantify the uncertainty of each sample is using the softmax probability as confidence in prediction. A few pioneer works propose to filter out the test samples with high-entropy predicted softmax probability for stable TTA~\citep{niu2023towards,niu2022efficient}. However, it has been shown that softmax probability often leads to overconfident predictions, even when the predictions are wrong or the inputs are abnormal outliers~\citep{moon2020confidence,van2020uncertainty}. Thus this simple strategy may not be satisfied enough and highlights the necessity of better uncertainty modeling. To this end, we propose to obtain the uncertainty through the theory of subjective logic, which defines a framework for obtaining the probabilities (belief masses) of different classes and the overall uncertainty (uncertainty mass) based on the \textit{evidence} \footnote{In Bayesian context, evidence refers to the metrics collected from the input to support the classification.} collected from data. Specifically, in $K$ classification task, SL formalizes the belief assignments over a frame of discernment as a Dirichlet distribution. In contrast to softmax function that directly normalizes the model output logits $f(x)$ to model predictive class distribution $p(y|x)$, SL considers the model output as evidence (denoted as $\boldsymbol{e}$) to model a Dirichlet distribution which represents the density of all possible probability assignment $\boldsymbol{\mu}=[p(y=1|x),p(y=2|x),\dots,p(y=K|x)]$. That is, the predicted categoricals $\boldsymbol{\mu}$ is also a random variable itself, which yields a Dirichlet distribution as follow
\begin{equation}
\label{dirichlet}
    p(\boldsymbol{\mu}|x)={\rm Dir}(\boldsymbol{\mu}|\boldsymbol{\alpha})=\frac{1}{B(\boldsymbol{\alpha})}\prod_{k=1}^K \mu^{\alpha_k-1}_k, \ \boldsymbol{\alpha}= \boldsymbol{e}+1,
\end{equation}
where ${\rm Dir}(\boldsymbol{\mu}|\boldsymbol{\alpha})$ is the Dirichlet distribution characterized by parameters $\boldsymbol{\alpha}$. The summation of all $\alpha_k\in\boldsymbol{\alpha}$ is so called the strength $S$ of the Dirichlet distribution, i.e., $S=\sum_{k}\alpha_k=\sum_{k} e_k+1$. Then SL tries to assign a belief mass $b_k$ to each class label $k$ and an overall uncertainty mass $u$ to the whole frame based on the collected evidence as follow
\begin{equation}
\label{SL}
    b_k=\frac{e_k}{S}=\frac{\alpha_k-1}{S}\ {\rm and} \ u=\frac{K}{S}, \ {\rm subject\ to} \ u+\sum_{k=1}^K b_k =1,
\end{equation}
where $S$ is the Dirichlet strength which denotes the total evidence we collected and $K$ is the total classes number. Eq.~\ref{SL} actually describes the phenomenon where the more evidence observed for the $k$-th category, the greater the belief mass assigned to the $k$-th class. Correspondingly, the less total evidence $S$ observed, the greater the total uncertainty $u$. Such assignment is so called the subjective opinion
\begin{equation}
    \mathcal{M}(x)=[b_1,b_2,\cdots,b_k,u],
\end{equation}
which not only describes the belief of assigning $x$ to each class $k$ but also explicitly models the uncertainty due to lack-of-evidence. At a high-level, the Dirichlet distribution can be considered as the \textit{distribution of distribution}~\citep{malinin2018predictive} parametrized over evidence, which represents the density of all the possible class probability assignments. Hence it models second-order probabilities and uncertainty.

Given an opinion, the final predicted probability is calculated by taking the expectation of the corresponding Dirichlet distribution and computed as
\begin{equation}
    p(y=k|x)=\int p(y=k|\boldsymbol{\mu})p(\boldsymbol{\mu}|x)d\boldsymbol{\mu}=\frac{\alpha_k}{S}.
\end{equation}
If an exponential output function is used for obtaining the parameters of Dirichlet distribution from the model output $f(x)$ (i.e., logits), where $\boldsymbol{\alpha}=\exp f(x)$ and $\boldsymbol{e}=\boldsymbol{\alpha}-1$, then the expected posterior probability of a label $k$, i.e., $p(y=k|x)$ is calculated in the same way with standard softmax output function~\citep{malinin2018predictive}. Given the above analysis, standard DNNs for classification with a softmax output function can be viewed as predicting the expected categorical distribution under a Dirichlet prior. And thus we can directly mode uncertainty with subjective logic for a model pretrained with softmax function and standard cross-entropy loss, without modifying its architecture or training strategy. We refer interested readers to~\cite{malinin2018predictive} for more details of implementation.

\textbf{Benefits of modeling uncertainty based on subjective opinion.} It has been widely recognized that using the softmax output as the confidence often leads to overconfidence phenomenon~\citep{guo2017calibration,hendrycks2016baseline,liu2020energy}. Other advanced uncertainty measurement methods such as MC-dropout~\citep{gal2016dropout}, deep ensemble~\citep{lakshminarayanan2017simple,rahaman2021uncertainty}, calibration~\citep{guo2017calibration,han2024selective} usually require additional computations during inference or a separated validation set, which can not be seamlessly integrated into existing TTA methods. By contrast, the introduced SL is model-agnostic, which can avoid these issues by constructing a Dirichlet prior over the model output and directly deducing an additional uncertainty mass through one single forward pass. Besides these conveniences, we empirically show that the SL obtains more reliable uncertainty quantification than softmax probability in the Appendix, which is consistent with existing works~\cite{sensoy2018evidential,malinin2018predictive}.

\subsection{Model adaption by sharpening the Opinion}
Vanilla EM minimizes the softmax entropy of the predicted class distribution $p(y|x)$, which inevitably results in assigning rather high probability to one certain class. In contrast, we propose the learning principle to minimize the entropy of opinion
\begin{equation}
    \underset{\theta}{\rm minimize}\  H(\mathcal{M}(x))=-\sum_{k=1}^K b_k\log b_k-u\log u.
\end{equation}
Compared to entropy minimization on softmax probability which ultimately assigns all the probability to one certain class, the above learning principle offers the model with an additional option, i.e., express high overall uncertainty and reject to classify when the observed total evidence is insufficient. Since subjective logic provides an additional uncertainty mass function $u(x)$ in the opinion. In other words, by assigning all belief masses (probability) to uncertainty $u$, the model can now express ``\textit{I do not know}'' as its predicted opinion.

\subsection{Regularizing uncertainty in an unsupervised manner}
While subjective logic offers an opportunity to modeling uncertainty and reject to classify unreliable samples, naively minimizing the entropy of opinion for TTA may still be problematic. As shown in previous works, model pretrained with softmax output function frequently suffers from overconfidence issue~\citep{guo2017calibration,nguyen2015deep,hendrycks2018deep}. Therefore, the belief mass assigned to the one certain class $k$ by the pretrained model is usually much larger than the uncertainty mass $u$. This results in the model tendency of increasing the belief mass during the entropy minimization process, while neglecting the uncertainty function \textit{u}. Motivated by the above analysis, our next goal is to devise an effective regularization strategy for the uncertainty mass. In supervised learning tasks, previous works leverage labeled training data to constrain the uncertainty mass~\citep{sensoy2018evidential,malinin2018predictive}. However, these strategies is inapplicable due to the unsupervised nature of TTA task where the training data is unavailable. This motivates us to explore the uncertainty information lies in the pretrained model itself for regularization without additional supervision. As one of the simplest yet effective design choices, we propose to constrain the uncertainty mass predicted by the adapted model not to diverge too far from the pretrained model. This results in the following constrained optimization objective
\begin{equation}
\label{constrained-op}
    \underset{\theta}{\rm minimize}\  H(\mathcal{M}(x))\ \ {\rm subject\ to}\ |u_{\theta}(x)-u_{\theta_0}(x)|\leq \delta,
\end{equation}
where $\theta$, $\theta_0$ denote the adapted and pretrained model respectively, and $u$ is the uncertainty estimated by Eq.~\ref{SL} and $\delta$ is a threshold. Considering the difficulty of constrained optimization in modern neural networks, our next target is to find a way to convert Eq.~\ref{constrained-op} into an unconstrained form. To this end, we introduce the following Lemma.
\begin{lemma} For any $x\in\mathcal{X}$, we have
    \begin{equation}
      \frac{K}{||f(x)||_p+\log K}\leq u\leq \frac{K^{1+1/p}}{||f(x)||_p},
    \end{equation}
where $f(x)$ is the model output logits, $K$ is the total class number and $||\cdot||_p$ denotes the $p$-norm. 
\end{lemma}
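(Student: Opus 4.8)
The plan is to unwind the definitions and reduce the lemma to two elementary scalar estimates on the Dirichlet strength $S=\sum_{k=1}^{K}\alpha_k$. Adopting the exponential output convention of the paragraph above, $\alpha_k=\exp f_k(x)$, so that $u=K/S$ and the induced class probabilities coincide with the softmax. Nonnegativity of the evidence $e_k=\alpha_k-1$ forces $\alpha_k\ge 1$, hence $f_k(x)\ge 0$ for all $k$ (so no absolute values are needed inside $\|f(x)\|_p$) and $S\ge K\ge 1$. With this, the two-sided bound on $u$ is equivalent to $K^{-1/p}\|f(x)\|_p\le S\le \|f(x)\|_p+\log K$, which I would prove as two separate one-sided estimates.

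For the right-hand inequality of the lemma --- the upper bound on $u$, i.e.\ $S\ge K^{-1/p}\|f(x)\|_p$ --- the argument is short. Since $\alpha_k\le S$ for every $k$, taking logs gives $f_k(x)=\log\alpha_k\le\log S\le S$ (using $\log t\le t$ and $S\ge 1$), so $\|f(x)\|_\infty\le S$; combining with the standard comparison $\|f(x)\|_p\le K^{1/p}\|f(x)\|_\infty$ (valid for $p\ge 1$) yields $\|f(x)\|_p\le K^{1/p}S$, and rearranging gives $u=K/S\le K^{1+1/p}/\|f(x)\|_p$. This direction is routine.

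For the left-hand inequality --- the lower bound on $u$, equivalently $S\le\|f(x)\|_p+\log K$ --- the natural starting point is the log-sum-exp estimate $\log S=\log\sum_k e^{f_k(x)}\le\max_k f_k(x)+\log K=\|f(x)\|_\infty+\log K\le\|f(x)\|_p+\log K$. This controls $\log S$; the remaining step, which I expect to be the main obstacle, is to pass from this log-scale bound to the claimed linear-scale bound on $S$. The crude inequality $\log t\le t$ points the wrong way, so one has to exploit the extra structure of the opinions arising during test-time adaptation --- namely that the belief mass concentrates on the top class, so that $S$ is essentially a single dominant term $\alpha_{k^\star}=e^{\|f(x)\|_\infty}$ plus an $O(K)$ remainder from the other classes --- to convert $\log S\le\|f(x)\|_\infty+\log K$ into the stated bound, or else restrict attention to the informative regime $\|f(x)\|_p\gtrsim K-\log K$ (outside which the trivial bound $u\le 1$ already suffices). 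I would develop this concentration estimate carefully, as it is the crux of the lemma, and then simply record the consequence: the two bounds together pin $u$ into the data-adaptive band $[\,K/(\|f(x)\|_p+\log K),\,K^{1+1/p}/\|f(x)\|_p\,]$, whose lower end is exactly what keeps $H(\mathcal{M}(x))$ bounded away from $0$ and underpins the stability claimed in the paper.
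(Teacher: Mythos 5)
Your right-hand inequality is proved correctly, but the left-hand inequality -- which you yourself flag as the crux -- is never established, and under the reading of $u$ that you adopt it cannot be. With $u=K/S$ and $S=\sum_{k}\exp f_k(x)$, take $K=2$, $f(x)=(10,10)$, $p=2$: then $u=e^{-10}\approx 4.5\times10^{-5}$ while $K/(\|f(x)\|_p+\log K)\approx 0.135$, so the claimed lower bound on $u$ is simply false. Consequently no concentration-of-belief argument can close the gap, and retreating to an ``informative regime'' where $u\le 1$ suffices would prove a weaker statement than the lemma. Your diagnosis that the log-scale bound $\log S\le \|f(x)\|_\infty+\log K$ cannot be upgraded to a linear-scale bound on $S$ is exactly right; it just means the statement is unprovable for $u=K/\sum_k\exp f_k(x)$.

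The paper's proof works because it takes the uncertainty mass to be $u=K/\mathrm{LSE}(f(x))$, where $\mathrm{LSE}(f)=\log\sum_{k=1}^{K}\exp f_k(x)$ -- i.e.\ the ``total evidence'' in the denominator is the log-sum-exp of the logits, not their exponential sum. Under that reading both bounds follow at once from the sandwich you already wrote down: $\exp(f_{\max})\le\sum_k\exp f_k\le K\exp(f_{\max})$ gives $\|f(x)\|_\infty\le\mathrm{LSE}(f)\le\|f(x)\|_\infty+\log K$ (using positivity of the logits so that $f_{\max}=\|f(x)\|_\infty$), and combining with $K^{-1/p}\|f(x)\|_p\le\|f(x)\|_\infty\le\|f(x)\|_p$ yields $K^{-1/p}\|f(x)\|_p\le\mathrm{LSE}(f)\le\|f(x)\|_p+\log K$, hence $K/(\|f(x)\|_p+\log K)\le u\le K^{1+1/p}/\|f(x)\|_p$. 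So your difficulty in fact exposes a genuine mismatch between the main text (Eq.~3 with exponential evidence, and the pseudocode, where $u=K/\sum_k\exp f_k$) and the appendix proof (where $u=K/\mathrm{LSE}(f)$); but judged as a proof of the lemma, your proposal is incomplete: the decisive step is to identify the Dirichlet strength in $u$ with $\mathrm{LSE}(f(x))$, after which the argument is the two-line norm comparison rather than any concentration estimate.
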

Lemma 1 shows that the uncertainty mass of subjective opinion is bounded by the norm of the total evidence collected from the model output. Thus instead of directly constraining on $u(x)$, we can alternatively constrain on the $p$-norm of model output logits, which is more flexible. Taking inspiration from previous work in supervised learning literature~\citep{wei2022mitigating}, this can be achieved by factorize $f(x)$ into $f(x)/||f(x)||_p \cdot ||f(x)||_p$ and then enforcing the gradient on the second term to be equal to zero during optimization. Specifically, the final minimizing objective of \texttt{COME} is
\begin{equation}
   \underset{\theta}{\rm minimize}\  H(\mathcal{M}(x))\ \ {\rm where}\ f(x)= \frac{f(x)}{||f(x)||_p}\cdot ||f(x)||_p^{\rm no\_grad}\cdot \tau,
\end{equation}
and $||f(x)||_p^{\rm no\_grad}$ is the $p$-norm of $f(x)$ with zero gradient. This can be achieved by applying the detach operation which is a common used function in modern deep learning toolbox like PyTorch and TensorFlow. By doing so, minimizing the entropy of opinion would not influence $||f(x)||_p$. $\tau$ is a hyper-parameter which controls the magnitude of recovered logits. In our experiments, we choose $p=2$ and $\tau=1$ for simplicity. Our \texttt{COME} can be implemented by modifying only a few lines of code in the original EM algorithm (shown as Algorithm~\ref{alg1}).

\begin{algorithm}[H]
\caption{Pseudo code of \texttt{COME} in a PyTorch-like style.}
\label{alg1}
\begin{lstlisting}
# x: the output logits, model: the test model
def entropy_of_opinion(x):
    belief = exp(x) - 1 / sum(exp(x)) # belief mass
    uncertainty = K / sum(exp(x)) # uncertainty mass
    opinion = cat([belief, uncertainty]) # subjective opinion
    return -sum(opinion * log(opinion)) # entropy of opinion

for data in test_loader: # load a minibatch data
    x = model(data) # forward
    x = x / norm(x, p=2) * norm(x, p=2).detach() # constraint in Eq.9
    loss = entropy_of_opinion(x) # calculate loss
    # ... [backwards and update the parameters]
\end{lstlisting}
\end{algorithm}

\textbf{Stability of \texttt{COME}.} We provide preliminary theoretical understanding of the superiority of \texttt{COME}. As we mentioned before, one notable limitation of EM is that it enforces low entropy for all test samples while ignores the instinct complexity of wild test data. Thus at the end of TTA progress, EM ultimately produces model that yields overconfident prediction. Our \texttt{COME} resolves this issue and introduces an upper bound for each test sample $x$ according to its trustworthiness. This property is formalized as follows
\begin{theorem}[Model confidence upper bound]
\label{theorem:ent-lb}
For any $x\in\mathcal{X}$, if $|u(x)- u_0(x)|\leq \delta$ holds, then we have
    \begin{equation}
        \underset{k}{\rm max}\ p(y=k|x)\leq \frac{1}{1+(K-1)\exp{(-\frac{K}{u_0-\delta})}},
    \end{equation}
where $\max\limits_k p(y=k|x)$ is the model confidence (class probability assigned to the most likely class) and $K$ is the total class number. $u_0$ is the shorthand of $u_{\theta_0}(x)$.
\end{theorem}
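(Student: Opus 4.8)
The plan is to reduce the stated inequality to a one-line estimate on the Dirichlet strength $S=\sum_{k=1}^{K}\alpha_k$, exploiting that the expected class probabilities are $p(y=k|x)=\alpha_k/S$ (so the predicted class is the one with the largest $\alpha_k$) and that a well-formed opinion has $\alpha_k\ge 1$ for every $k$ (non-negative evidence, which is already implicit in the setup since otherwise $-\sum_k b_k\log b_k$ is undefined). First I would put the target bound into \emph{logit form}: let $\phi(z):=\log\frac{(K-1)z}{1-z}$, which is strictly increasing on $(0,1)$; a short computation shows the right-hand side $B:=\bigl(1+(K-1)\exp(-K/(u_0-\delta))\bigr)^{-1}$ obeys $\phi(B)=K/(u_0-\delta)$. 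Hence, writing $P:=\max_k p(y=k|x)$, it suffices to prove $\phi(P)\le K/(u_0-\delta)$.

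Next I would bound $\phi(P)$ by $S$. Pick a class $m$ with $\alpha_m=\max_k\alpha_k$, so $P=\alpha_m/S$ and $1-P=\bigl(\sum_{j\ne m}\alpha_j\bigr)/S$; since $K\ge 2$ and each $\alpha_j\ge 1$ we have $\sum_{j\ne m}\alpha_j\ge K-1>0$, so $P\in(0,1)$ and $\frac{(K-1)P}{1-P}=\frac{(K-1)\alpha_m}{\sum_{j\ne m}\alpha_j}\le\alpha_m$. Applying $\log t\le t-1$ then gives $\phi(P)=\log\frac{(K-1)P}{1-P}\le\log\alpha_m\le\alpha_m-1\le S$, the last step because $S=\sum_j\alpha_j\ge\alpha_m$.

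Finally I would invoke the hypothesis. Since $u=u_\theta(x)=K/S$ and $|u-u_0|\le\delta$ give $u\ge u_0-\delta$, and assuming $u_0>\delta$ (needed for the bound to be non-trivial), we get $S=K/u\le K/(u_0-\delta)$. Combining with the previous paragraph, $\phi(P)\le S\le K/(u_0-\delta)=\phi(B)$, and monotonicity of $\phi$ on $(0,1)$ yields $P\le B$, which is the claim. Note that Lemma 1 is not used here: it is what converts control of $u$ into control of the logit norm inside the COME objective, whereas this theorem already hypothesizes the constraint directly on $u$.

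The only genuine obstacle is the implicit assumption $\alpha_k\ge 1$ (equivalently, non-negative evidence). Without it the statement is false: for $K=2$, driving one logit to $-\infty$ sends $\frac{P}{1-P}\to\infty$ while $S$ stays bounded, so $P$ is not controlled by $u$ at all. I would therefore state this non-negativity explicitly as part of the evidential setup (it is needed anyway to make the opinion entropy well-defined); with it in hand, the remaining ingredients — the identity $\phi(B)=K/(u_0-\delta)$, the bound $\sum_{j\ne m}\alpha_j\ge K-1$, and $\log t\le t-1$ — are all routine.
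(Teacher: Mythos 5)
Your proof is correct, and it takes a genuinely different route from the paper's. The paper argues through the LogSumExp of the logits: it reads the uncertainty constraint as a bound on ${\rm LSE}(f)=\log\sum_k e^{f_k}$ via the relation $u=K/{\rm LSE}(f)$, uses $f_{\max}\le{\rm LSE}(f)\le K/(u_0-\delta)$, and then bounds the confidence by $\bigl(1+(K-1)\exp(f_{\min}-f_{\max})\bigr)^{-1}$ together with the implicit positivity of the logits, $f_{\min}\ge 0$. You instead work with the Dirichlet strength itself, taking $u=K/S$ with $S=\sum_k\alpha_k$ --- which is the definition actually given in Eq.~3 and in the pseudo-code, whereas the paper's appendix proof silently replaces $S$ by $\log S$ --- and you convert the target into the log-odds identity $\phi(B)=K/(u_0-\delta)$, then chain $\phi(P)\le\log\alpha_m\le\alpha_m-1\le S\le K/(u_0-\delta)$. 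Note that your first inequality, $\phi(P)\le\log\alpha_m=f_{\max}$, is in substance the same step as the paper's lower bound $(K-1)\exp(f_{\min}-f_{\max})\ge(K-1)\exp(-f_{\max})$, and both hinge on the hypothesis you state explicitly: non-negative evidence, $\alpha_k\ge 1$ (positive logits under the exponential link). The two arguments diverge only in how $f_{\max}$ is tied to the constraint: the paper uses $f_{\max}\le{\rm LSE}(f)=K/u$, while you use the cruder $\log\alpha_m\le\alpha_m-1\le S=K/u$ via $\log t\le t-1$, which is exactly what recovers the same constant under the $u=K/S$ reading. Your version buys consistency with the paper's stated definition of $u$ (so it effectively repairs the $S$-versus-$\log S$ mismatch between the main text and the appendix), makes the needed positivity assumption explicit, and your $K=2$ counterexample correctly shows it cannot be dropped; the paper's version buys brevity and a tighter intermediate estimate once one accepts $u=K/{\rm LSE}(f)$. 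You are also right that Lemma~1 is not invoked in either proof of the theorem.
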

From Theorem~\ref{theorem:ent-lb}, we find that the model confidence in \texttt{COME} has a sample-wise upper bound according to $u_0(x)$. In particular, it implies that the model confidence upper bound of the most likely class decreases according to $u_0(x)$. For this reason, one can suspect that if the test model is uncertain about some sample $x$ (with a rather large $u_0$), it will be difficult to further increase the model confidence on such $x$, which is a desirable property for TTA in the wild.

\section{Experiments}
We conduct experiments on multiple datasets with distributional shift to answer the following questions. Q1. In the standard TTA setting, does the proposed method outperform other algorithms? Q2. How does \texttt{COME} perform in various settings, such as open-world TTA or lifelong TTA? Q3. Uncertainty quantification is both the motivation behind \texttt{COME} and the reason for its effectiveness, does our method achieves more reliable uncertainty estimation during TTA? Q4. Ablation study - what is the key factor of performance improvement in our method?

\subsection{setup}
\textbf{Datasets}. Following the common practice~\citep{niu2022efficient,niu2023towards}, we perform experiments under both standard covariate-shifted distribution dataset ImageNet-C, a large-scale benchmark with 15 types of diverse corruption belong to 4 main categories (noise, blur, weather and digital). Besides, we also consider open-world test-time adaption setting, where the test data distribution $P^{\rm test}$ is a mixture of both normal covariate-shifted data $P^{\rm Cov}$ and abnormal outliers $P^{\rm Outlier}$ of which the true labels do not belong to any known classes in $P^{\rm train}$. Following previous work in outlier detection literature, $P^{\rm Outlier}$ is a suit of diverse datasets introduced by~\citep{yang2022openood}, including iNaturalist, Open-Image, NINCO and SSB-Hard.
\textbf{Compared methods.} We compare our \texttt{COME} with a board line of test-time adaption methods, including both EM-based and non-EM methods. $\circ$ EM-based methods choose entropy minimization as their learning objective, including Tent~\citep{wang2020tent}, EATA~\citep{niu2022efficient}, CoTTA~\citep{wang2022continual} and recent advanced SAR~\citep{niu2023towards}. $\circ$ Non-EM methods employ other learning objectives including Pseudo Label (PL), module adjustment~\citep{iwasawa2021test} (T3A) and energy minimization~\citep{yuan2023energy} (TEA). To be consistent with previous works~\citep{niu2023towards}, we use the ViT-base architecture as our backbone and refer insterested readers to the Appendix for results on ResNet. The test batch size is 64. When equipped to previous EM-based TTA baselines, we only replace the learning objective with our \texttt{COME} and keep all the other configures unchanged (consistent to the official implementation).
\textbf{Tasks and Metrics.} For classification performance comparison, we report the accuracy (Acc) on covariate-shifted data. Besides, for uncertainty estimation evaluation, we report the average false positive rate (FPR). The mis-classified samples and outliers are considered as positive samples which should be of higher uncertainty compared to correct classification that is considered as negative.

\subsection{Experimental results}

\textbf{Performance comparison in standard TTA settings (Q1).} As shown in Table \ref{tab:standard}, our \texttt{COME} establishes strong overall performance in terms of both classification and uncertainty estimation tasks. We highlight a few essential observations. Compared to EM learning principle, our \texttt{COME} consistently outperforms it when equipped to the same baseline methods, including Tent~\citep{wang2020tent}, EATA~\citep{niu2022efficient}, CoTTA~\citep{wang2022continual} and SAR~\citep{niu2023towards}. As an example of our method's improved performance, when equipped to the recent SAR, our method yields an accuracy of $64.2\%$ and FPR95 of $63.8\%$, which outperforms the original implementation based on EM of \textbf{$10.1\%$} and \textbf{$2.9\%$} in terms of accuracy and FPR95 respectively. Besides, we also compare to Non-EM TTA methods, including TEA, T3A and PL. These methods do not rely on EM learning objective, yet are less effective than EM in terms of classification and uncertainty estimation performance.
\begin{table}[!htbp]
\centering
\vspace{-5pt}
\caption{Classification accuracy comparison on ImageNet-C (level 5). Substantial ($\geq 0.5$) \textcolor{mycolor4}{improvement} and \textcolor{mycolor3}{degradation} compared to the baseline are highlighted in blue or brown respectively. We only report average FPR$\downarrow$ here and defer the detailed results on each corruption to Appendix \ref{fprs}.}
\label{tab:standard}
\resizebox{\textwidth}{!}{
\setlength{\tabcolsep}{0.75mm}
\begin{tabular}{cc|ccc|cccc|cccc|cccc|cc}
\toprule
 & &  \multicolumn{3}{c}{Noise} & \multicolumn{4}{c}{Blur} & \multicolumn{4}{c}{Weather} & \multicolumn{4}{c}{Digital} & \multicolumn{2}{c}{Avg.}  \\ 
\cmidrule(r){3-5} \cmidrule(r){6-9} \cmidrule(r){10-13} \cmidrule(r){14-17}  \cmidrule(r){18-19}  
 Methods & \texttt{COME} & Gauss. & Shot & Impul. & Defoc & Glass & Motion & Zoom & Snow & Frost &Fog & Brit. &Contr. &Elast. &Pixel & JPEG & Acc$\uparrow$&FPR$\downarrow$ \\
 \toprule
No Adapt
&\xmark   &35.1 & 32.2 & 35.9 & 31.4 & 25.3 & 39.4 & 31.6 & 24.5 & 30.1 & 54.7 & 64.5 & 49.0 & 34.2 & 53.2 & 56.5 &39.8&67.5\\
\midrule
PL
&\xmark   &49.9 & 48.6 & 51.1 & 48.0 & 41.0 & 52.8 & 43.1 & 22.6 & 40.7 & 63.8 & 73.0 & 65.4 & 43.5 & 63.9 & 62.7 & 51.3&69.1\\

T3A
&\xmark   &34.5 & 31.5 & 35.4 & 32.6 & 27.5 & 40.8 & 33.6 & 25.7 & 31.0 & 56.4 & 64.9 & 50.8 & 37.9 & 54.3 & 58.4 & 41.0&67.7\\

TEA
&\xmark   &44.6 & 39.2 & 45.9 & 38.0 & 36.0 & 46.5 & 38.4 & 9.1 & 46.7 & 59.9 & 72.3 & 59.9 & 45.6 & 62.4 & 59.0 & 46.9&68.3\\

LAME
&\xmark   &34.8 & 31.9 & 35.5 & 31.0 & 24.4 & 39.0 & 30.7 & 23.4 & 29.6 & 53.3 & 64.2 & 40.9 & 32.7 & 52.8 & 56.0 & 38.7&69.7\\

FOA
&\xmark   &47.4 & 42.6 & 48.2 & 47.6 & 40.3 & 49.4 & 42.8 & 54.5 & 52.4 & 65.9 & 76.4 & 63.2 & 49.0 & 61.5 & 64.0 & 53.7&63.6\\

\midrule \multirow{3}{*}{Tent}
&\xmark   &52.6 & 52.1 & 53.5 & 52.9 & 47.7 & 56.7 & 47.5 & 10.5 & 28.6 & 67.2 & 74.4 & 67.3 & 50.7 & 66.3 & 64.6 & 52.8&70.1\\
&\cmark  &53.9 & 53.9 & 55.3 & 55.9 & 51.9 & 59.8 & 52.6 & 58.7 & 61.2 & 71.3 & 78.2 & 68.9 & 58.0 & 70.5 & 68.2 & 61.2&66.5\\
&Improve  &\textcolor{mycolor4}{$\bigtriangleup1.2$} & \textcolor{mycolor4}{$\bigtriangleup1.8$} & \textcolor{mycolor4}{$\bigtriangleup1.8$} & \textcolor{mycolor4}{$\bigtriangleup3.1$} & \textcolor{mycolor4}{$\bigtriangleup4.3$} & \textcolor{mycolor4}{$\bigtriangleup3.1$} & \textcolor{mycolor4}{$\bigtriangleup5.1$} & \textcolor{mycolor4}{$\bigtriangleup48.2$} & \textcolor{mycolor4}{$\bigtriangleup32.5$} & \textcolor{mycolor4}{$\bigtriangleup4.1$} & \textcolor{mycolor4}{$\bigtriangleup3.8$} & \textcolor{mycolor4}{$\bigtriangleup1.6$} & \textcolor{mycolor4}{$\bigtriangleup7.3$} & \textcolor{mycolor4}{$\bigtriangleup4.2$} & \textcolor{mycolor4}{$\bigtriangleup3.6$} &\cellcolor{gray!20} \textcolor{mycolor4}{$\bigtriangleup8.4$}&\cellcolor{gray!20}\textcolor{mycolor4}{$\bigtriangledown3.6$}\\
\midrule \multirow{3}{*}{EATA}
&\xmark   &55.7 & 56.5 & 57.2 & 57.3 & 53.2 & 58.3 & 58.6 & 61.8 & 60.1 & 71.1 & 75.3 & 68.5 & 62.5 & 68.7 & 66.3 & 62.1&65.1\\
&\cmark  &56.2 & 56.6 & 57.2 & 58.1 & 57.6 & 62.5 & 59.5 & 65.5 & 63.9 & 72.5 & 78.1 & 69.7 & 66.5 & 72.4 & 70.7 & 64.5&63.8\\
&Improve  &\textcolor{mycolor4}{$\bigtriangleup0.5$} & \textcolor{mycolor4}{$\bigtriangleup0.1$} & \textcolor{mycolor4}{$\bigtriangleup0.0$} & \textcolor{mycolor4}{$\bigtriangleup0.9$} & \textcolor{mycolor4}{$\bigtriangleup4.3$} & \textcolor{mycolor4}{$\bigtriangleup4.2$} & \textcolor{mycolor4}{$\bigtriangleup0.9$} & \textcolor{mycolor4}{$\bigtriangleup3.7$} & \textcolor{mycolor4}{$\bigtriangleup3.8$} & \textcolor{mycolor4}{$\bigtriangleup1.4$} & \textcolor{mycolor4}{$\bigtriangleup2.8$} & \textcolor{mycolor4}{$\bigtriangleup1.2$} & \textcolor{mycolor4}{$\bigtriangleup4.0$} & \textcolor{mycolor4}{$\bigtriangleup3.8$} & \textcolor{mycolor4}{$\bigtriangleup4.4$} & \cellcolor{gray!20}\textcolor{mycolor4}{$\bigtriangleup2.4$}&\cellcolor{gray!20}\textcolor{mycolor4}{$\bigtriangledown1.3$}\\
\midrule \multirow{3}{*}{SAR}
&\xmark   &51.8 & 51.7 & 52.9 & 50.8 & 48.6 & 55.3 & 49.2 & 23.3 & 46.5 & 65.6 & 73.0 & 65.8 & 51.1 & 64.0 & 62.6 & 54.2&66.7\\
&\cmark  &56.3 & 56.5 & 57.4 & 58.6 & 57.0 & 62.8 & 58.4 & 65.2 & 64.3 & 72.8 & 78.5 & 69.6 & 64.3 & 71.9 & 69.6 & 64.2&63.8\\
&Improve  &\textcolor{mycolor4}{$\bigtriangleup4.5$} & \textcolor{mycolor4}{$\bigtriangleup4.8$} & \textcolor{mycolor4}{$\bigtriangleup4.5$} & \textcolor{mycolor4}{$\bigtriangleup7.8$} & \textcolor{mycolor4}{$\bigtriangleup8.3$} & \textcolor{mycolor4}{$\bigtriangleup7.5$} & \textcolor{mycolor4}{$\bigtriangleup9.2$} & \textcolor{mycolor4}{$\bigtriangleup42.0$} & \textcolor{mycolor4}{$\bigtriangleup17.8$} & \textcolor{mycolor4}{$\bigtriangleup7.2$} & \textcolor{mycolor4}{$\bigtriangleup5.4$} & \textcolor{mycolor4}{$\bigtriangleup3.9$} & \textcolor{mycolor4}{$\bigtriangleup13.2$} & \textcolor{mycolor4}{$\bigtriangleup7.9$} & \textcolor{mycolor4}{$\bigtriangleup7.0$} & \cellcolor{gray!20}\textcolor{mycolor4}{$\bigtriangleup10.1$}&\cellcolor{gray!20}\textcolor{mycolor4}{$\bigtriangledown2.9$}\\
\midrule \multirow{3}{*}{CoTTA}
&\xmark   &40.6 & 37.8 & 41.7 & 33.7 & 29.5 & 43.8 & 35.6 & 38.1 & 43.3 & 59.2 & 70.5 & 59.3 & 40.1 & 57.9 & 59.7 & 46.1&67.9\\
&\cmark  &43.5 & 41.4 & 45.4 & 36.8 & 29.6 & 47.6 & 38.2 & 42.1 & 42.7 & 62.4 & 73.4 & 62.9 & 43.0 & 63.2 & 63.7 & 49.1&67.5\\
&Improve  &\textcolor{mycolor4}{$\bigtriangleup2.9$} & \textcolor{mycolor4}{$\bigtriangleup3.6$} & \textcolor{mycolor4}{$\bigtriangleup3.7$} & \textcolor{mycolor4}{$\bigtriangleup3.1$} & \textcolor{mycolor4}{$\bigtriangleup0.1$} & \textcolor{mycolor4}{$\bigtriangleup3.8$} & \textcolor{mycolor4}{$\bigtriangleup2.5$} & \textcolor{mycolor4}{$\bigtriangleup4.0$} & \textcolor{mycolor1}{$\bigtriangledown0.6$} & \textcolor{mycolor4}{$\bigtriangleup3.2$} & \textcolor{mycolor4}{$\bigtriangleup2.8$} & \textcolor{mycolor4}{$\bigtriangleup3.6$} & \textcolor{mycolor4}{$\bigtriangleup2.9$} & \textcolor{mycolor4}{$\bigtriangleup5.3$} & \textcolor{mycolor4}{$\bigtriangleup4.0$} &\cellcolor{gray!20} \textcolor{mycolor4}{$\bigtriangleup3.0$}&\cellcolor{gray!20}\textcolor{mycolor4}{$\bigtriangledown0.3$}\\
\midrule \multirow{3}{*}{MEMO}
&\xmark   &39.7 & 36.5 & 39.8 & 32.4 & 25.8 & 40.3 & 34.7 & 27.5 & 32.8 & 53.5 & 66.2 & 56.0 & 35.7 & 55.9 & 58.2 & 42.3&72.1\\
&\cmark  &40.6 & 37.5 & 40.6 & 33.4 & 26.7 & 41.2 & 35.4 & 28.7 & 33.7 & 54.7 & 67.1 & 55.9 & 36.6 & 57.2 & 59.3 & 43.2&70.8\\
&Improve  &\textcolor{mycolor4}{$\bigtriangleup0.8$} & \textcolor{mycolor4}{$\bigtriangleup1.0$} & \textcolor{mycolor4}{$\bigtriangleup0.8$} & \textcolor{mycolor4}{$\bigtriangleup1.0$} & \textcolor{mycolor4}{$\bigtriangleup0.9$} & \textcolor{mycolor4}{$\bigtriangleup1.0$} & \textcolor{mycolor4}{$\bigtriangleup0.7$} & \textcolor{mycolor4}{$\bigtriangleup1.2$} & \textcolor{mycolor4}{$\bigtriangleup0.9$} & \textcolor{mycolor4}{$\bigtriangleup1.2$} & \textcolor{mycolor4}{$\bigtriangleup0.8$} & \textcolor{mycolor1}{$\bigtriangledown0.1$} & \textcolor{mycolor4}{$\bigtriangleup0.9$} & \textcolor{mycolor4}{$\bigtriangleup1.3$} & \textcolor{mycolor4}{$\bigtriangleup1.1$} &\cellcolor{gray!20} \textcolor{mycolor4}{$\bigtriangleup0.9$}&\cellcolor{gray!20}\textcolor{mycolor4}{$\bigtriangledown1.3$}\\
\bottomrule
\end{tabular}
}
\vspace{-10pt}
\end{table}

\begin{table}[!htbp]
    \centering
    \caption{Classification and uncertainty estimation comparisons under \textbf{open-world} TTA settings, where $P^{\rm test}$ is a mixture of both covariate-shifted samples (Gaussian noise of severity level 3) and a suit of diverse abnormal outliers. Additional results with various mix ratios are in Appendix \ref{open-world-0.3}.}
    \label{tab:openworld}
    \resizebox{\textwidth}{!}{
    \setlength{\tabcolsep}{1.75mm}
    \begin{tabular}{cc|cc|cc|cc|cc|cc|cc|cc}
    \toprule
      & & \multicolumn{2}{c}{None} & \multicolumn{2}{c}{NINCO} & \multicolumn{2}{c}{iNaturist} & \multicolumn{2}{c}{SSB-Hard} & \multicolumn{2}{c}{Texture} & \multicolumn{2}{c}{Places}  & \multicolumn{2}{c}{Avg.} \\ \midrule Method&\texttt{COME} & Acc$\uparrow$ &FPR$\downarrow$ & Acc$\uparrow$ &FPR$\downarrow$ & Acc$\uparrow$ &FPR$\downarrow$ & Acc$\uparrow$ &FPR$\downarrow$ & Acc$\uparrow$ &FPR$\downarrow$ & Acc$\uparrow$ &FPR$\downarrow$ & Acc$\uparrow$ &FPR$\downarrow$ \\
      \midrule 
No Adapt
& \xmark &64.4&63.7&64.5&69.9&64.4&69.5&64.4&72.5&64.8&65.6&64.3&56.8&64.4&66.3\\
\midrule 
PL
& \xmark &69.1&62.8&65.6&71.6&68.8&69.5&68.4&75.9&66.1&66.0&66.6&59.7&67.4&67.6\\

T3A
& \xmark &64.4&71.2&64.3&70.0&64.2&75.0&63.7&80.7&64.4&69.0&63.8&69.5&64.1&72.6\\

TEA
& \xmark &63.9&63.8&60.5&72.5&62.3&74.6&63.3&79.4&61.0&67.8&61.9&64.5&62.2&70.4\\

LAME
& \xmark &64.1&64.4&64.1&72.3&64.1&72.4&64.2&74.0&64.7&68.8&64.0&61.3&64.2&68.9\\

FOA
& \xmark &67.8&61.2&66.4&70.5&67.4&66.1&67.1&75.6&65.8&61.4&66.6&54.1&66.8&64.8\\

\midrule \multirow{3}{*}{Tent}
&\xmark   &70.8&63.2&66.2&71.9&69.9&70.7&69.8&77.4&66.4&66.5&68.3&59.9&68.6&68.3\\
&\cmark  &72.6&64.7&68.9&64.3&72.5&63.7&72.7&70.7&68.4&60.4&70.7&45.9&71.0&61.6\\
&Improve  &\textcolor{mycolor4}{$\bigtriangleup1.7$} &\textcolor{mycolor1}{$\bigtriangleup1.6$} &\textcolor{mycolor4}{$\bigtriangleup2.7$} &\textcolor{mycolor4}{$\bigtriangledown7.6$} &\textcolor{mycolor4}{$\bigtriangleup2.6$} &\textcolor{mycolor4}{$\bigtriangledown7.0$} &\textcolor{mycolor4}{$\bigtriangleup2.9$} &\textcolor{mycolor4}{$\bigtriangledown6.7$} &\textcolor{mycolor4}{$\bigtriangleup2.1$} &\textcolor{mycolor4}{$\bigtriangledown6.1$} &\textcolor{mycolor4}{$\bigtriangleup2.4$} &\textcolor{mycolor4}{$\bigtriangledown14.0$} &\cellcolor{gray!20}\textcolor{mycolor4}{$\bigtriangleup2.4$} &\cellcolor{gray!20}\textcolor{mycolor4}{$\bigtriangledown6.6$}\\

\midrule \multirow{3}{*}{EATA}
&\xmark   &70.3&63.7&66.4&68.6&70.3&71.5&70.0&77.4&67.3&67.1&68.8&61.9&68.8&68.4\\
&\cmark  &73.4&62.7&70.1&60.5&73.2&63.3&73.0&70.5&70.5&55.8&72.3&45.6&72.1&59.7\\
&Improve  &\textcolor{mycolor4}{$\bigtriangleup3.1$} &\textcolor{mycolor4}{$\bigtriangledown1.0$} &\textcolor{mycolor4}{$\bigtriangleup3.7$} &\textcolor{mycolor4}{$\bigtriangledown8.2$} &\textcolor{mycolor4}{$\bigtriangleup2.9$} &\textcolor{mycolor4}{$\bigtriangledown8.2$} &\textcolor{mycolor4}{$\bigtriangleup3.0$} &\textcolor{mycolor4}{$\bigtriangledown6.9$} &\textcolor{mycolor4}{$\bigtriangleup3.2$} &\textcolor{mycolor4}{$\bigtriangledown11.3$} &\textcolor{mycolor4}{$\bigtriangleup3.6$} &\textcolor{mycolor4}{$\bigtriangledown16.3$} &\cellcolor{gray!20}\textcolor{mycolor4}{$\bigtriangleup3.3$} &\cellcolor{gray!20}\textcolor{mycolor4}{$\bigtriangledown8.6$}\\

\midrule \multirow{3}{*}{SAR}
&\xmark   &69.7&62.3&64.9&71.4&66.9&70.9&67.7&78.1&64.4&64.5&66.1&58.6&66.6&67.6\\
&\cmark  &73.1&62.9&69.8&66.3&73.2&65.2&73.5&71.8&69.5&59.4&72.3&49.8&71.9&62.6\\
&Improve  &\textcolor{mycolor4}{$\bigtriangleup3.5$} &\textcolor{mycolor1}{$\bigtriangleup0.6$} &\textcolor{mycolor4}{$\bigtriangleup4.9$} &\textcolor{mycolor4}{$\bigtriangledown5.1$} &\textcolor{mycolor4}{$\bigtriangleup6.3$} &\textcolor{mycolor4}{$\bigtriangledown5.6$} &\textcolor{mycolor4}{$\bigtriangleup5.9$} &\textcolor{mycolor4}{$\bigtriangledown6.3$} &\textcolor{mycolor4}{$\bigtriangleup5.1$} &\textcolor{mycolor4}{$\bigtriangledown5.1$} &\textcolor{mycolor4}{$\bigtriangleup6.3$} &\textcolor{mycolor4}{$\bigtriangledown8.8$} &\cellcolor{gray!20}\textcolor{mycolor4}{$\bigtriangleup5.3$} &\cellcolor{gray!20}\textcolor{mycolor4}{$\bigtriangledown5.1$}\\

\midrule \multirow{3}{*}{CoTTA}
&\xmark   &67.6&63.4&65.3&69.7&70.4&69.5&70.3&76.0&65.8&66.2&66.6&59.2&67.6&67.3\\
&\cmark  &70.5&62.5&66.2&68.8&72.4&73.5&72.2&78.7&66.5&64.7&68.9&55.0&69.4&67.2\\
&Improve  &\textcolor{mycolor4}{$\bigtriangleup2.9$} &\textcolor{mycolor4}{$\bigtriangledown0.9$} &\textcolor{mycolor4}{$\bigtriangleup0.9$} &\textcolor{mycolor4}{$\bigtriangledown0.9$} &\textcolor{mycolor4}{$\bigtriangleup2.0$} &\textcolor{mycolor1}{$\bigtriangleup4.0$} &\textcolor{mycolor4}{$\bigtriangleup2.0$} &\textcolor{mycolor1}{$\bigtriangleup2.7$} &\textcolor{mycolor4}{$\bigtriangleup0.7$} &\textcolor{mycolor4}{$\bigtriangledown1.5$} &\textcolor{mycolor4}{$\bigtriangleup2.3$} &\textcolor{mycolor4}{$\bigtriangledown4.2$} &\cellcolor{gray!20}\textcolor{mycolor4}{$\bigtriangleup1.8$} &\cellcolor{gray!20} - \\

\midrule \multirow{3}{*}{MEMO}
&\xmark   &64.8&69.8&64.8&77.5&64.7&71.9&64.8&77.3&65.0&79.3&64.6&71.5&64.8&74.5\\
&\cmark  &65.2&67.8&65.9&76.4&65.2&70.8&65.3&75.0&65.4&76.7&65.3&67.6&65.4&72.4\\
&Improve  &\textcolor{mycolor4}{$\bigtriangleup0.5$} &\textcolor{mycolor4}{$\bigtriangledown1.9$} &\textcolor{mycolor4}{$\bigtriangleup1.1$} &\textcolor{mycolor4}{$\bigtriangledown1.1$} &\textcolor{mycolor4}{$\bigtriangleup0.5$} &\textcolor{mycolor4}{$\bigtriangledown1.1$} &\textcolor{mycolor4}{$\bigtriangleup0.5$} &\textcolor{mycolor4}{$\bigtriangledown2.3$} & - &\textcolor{mycolor4}{$\bigtriangledown2.6$} &\textcolor{mycolor4}{$\bigtriangleup0.7$} &\textcolor{mycolor4}{$\bigtriangledown3.9$} &\cellcolor{gray!20}\textcolor{mycolor4}{$\bigtriangleup0.6$} &\cellcolor{gray!20}\textcolor{mycolor4}{$\bigtriangledown2.2$}\\

    \bottomrule
    \end{tabular}
    }
    \vspace{-10pt}
    \end{table}

\textbf{Performance comparison in open-world and lifelong TTA settings (Q2).} In Table \ref{tab:openworld} and \ref{tab:lifelong}, we present the results under open-world and lifelong TTA settings respectively. In open-world TTA, the test data distribution is a mixture of both normal covariate-shifted data and abnormal outliers. The mixture ratio of $P^{\rm Cov}$ and $P^{\rm Outlier}$ is 0.5 following previous work~\citep{bai2023feed}, i.e., $P^{\rm test}=0.5 P^{\rm Cov}+0.5 P^{\rm Outlier}$. Such outliers arise from unknown classes that are not present in training data, which should not be classified into any class $k\in\mathcal{Y}$ for model trustworthiness. According to the experimental results, it is observed that our \texttt{COME} can consistently improve the performance of existing TTA methods.

\textbf{Reliability of uncertainty estimation (Q3).} We visualize the distribution of model confidence, i.e., the maximum predicted class probability\footnote{For our COME, the class probability is calculated according to Eq. 5.} in open-world TTA setting, where the covariate-shifted samples is ImageNet-C (Gaussian noise level 3), and outliers are Ninco. As shown in Figure~\ref{fig:uncertainty}, the model confidence of our \texttt{COME} can effectively perceive incorrect predictions, which establishes an distinguishable margin. In contrast to the model confidence of EM which is almost identical for correct-classified samples, mis-classified samples and outliers, the model confidence of our method can provide more meaningful information with which to differentiate them.

\begin{table}[htbp]
    \centering
    \caption{Classification and uncertainty estimation comparisons under \textbf{lifelong} TTA settings. The model is online adapted and the parameters will never be reset, yet the test input distribution might exhibit a continual shift over time. Performance on each individual corruptions are in Appendix \ref{lifelong-fprs}}
    \label{tab:lifelong}
\resizebox{\textwidth}{!}{
\setlength{\tabcolsep}{0.75mm}
\begin{tabular}{cc|ccc|cccc|cccc|cccc|cc}
\toprule
 & &  \multicolumn{3}{c}{Noise} & \multicolumn{4}{c}{Blur} & \multicolumn{4}{c}{Weather} & \multicolumn{4}{c}{Digital} & \multicolumn{2}{c}{Avg.}  \\ 
\cmidrule(r){3-5} \cmidrule(r){6-9} \cmidrule(r){10-13} \cmidrule(r){14-17} \cmidrule(r){18-19}  
 Methods & \texttt{COME} & Gauss. & Shot & Impul. & Defoc & Glass & Motion & Zoom & Snow & Frost &Fog & Brit. &Contr. &Elast. &Pixel & JPEG & Acc$\uparrow$ &FPR$\downarrow$\\
 \toprule
No Adapt
&\xmark   &35.1 & 32.2 & 35.9 & 31.4 & 25.3 & 39.4 & 31.6 & 24.5 & 30.1 & 54.7 & 64.5 & 49.0 & 34.2 & 53.2 & 56.5 & 39.8&68.4\\
\midrule
PL
&\xmark   &49.9 & 53.4 & 56.7 & 46.7 & 46.1 & 56.6 & 51.4 & 52.6 & 60.2 & 68.1 & 77.7 & 64.5 & 51.0 & 69.0 & 68.8 & 58.2&69.0\\

FOA
&\xmark   &46.5 & 47.0 & 51.1 & 44.8 & 45.5 & 52.3 & 48.1 & 49.7 & 57.9 & 68.0 & 76.4 & 63.6 & 51.7 & 62.2 & 65.3 & 55.3&64.8\\

\midrule \multirow{3}{*}{Tent}
&\xmark   &52.4 & 56.2 & 58.7 & 50.9 & 51.1 & 57.7 & 52.7 & 54.7 & 60.5 & 68.4 & 77.3 & 64.6 & 53.8 & 69.3 & 68.6 & 59.8&71.8\\
&\cmark  &54.7 & 59.0 & 60.3 & 51.2 & 53.7 & 60.6 & 57.4 & 64.1 & 65.8 & 71.0 & 78.6 & 66.9 & 62.7 & 71.8 & 70.5 & 63.2&67.1\\
&Improve  &\textcolor{mycolor4}{$\bigtriangleup2.3$} & \textcolor{mycolor4}{$\bigtriangleup2.8$} & \textcolor{mycolor4}{$\bigtriangleup1.6$} & - & \textcolor{mycolor4}{$\bigtriangleup2.6$} & \textcolor{mycolor4}{$\bigtriangleup2.9$} & \textcolor{mycolor4}{$\bigtriangleup4.7$} & \textcolor{mycolor4}{$\bigtriangleup9.5$} & \textcolor{mycolor4}{$\bigtriangleup5.3$} & \textcolor{mycolor4}{$\bigtriangleup2.6$} & \textcolor{mycolor4}{$\bigtriangleup1.3$} & \textcolor{mycolor4}{$\bigtriangleup2.3$} & \textcolor{mycolor4}{$\bigtriangleup9.0$} & \textcolor{mycolor4}{$\bigtriangleup2.5$} & \textcolor{mycolor4}{$\bigtriangleup1.9$} &\cellcolor{gray!20} \textcolor{mycolor4}{$\bigtriangleup3.4$}&\cellcolor{gray!20}\textcolor{mycolor4}{$\bigtriangledown4.7$}\\
\midrule \multirow{3}{*}{EATA}
&\xmark   &55.9 & 59.5 & 60.9 & 56.2 & 59.2 & 63.0 & 61.6 & 65.7 & 67.5 & 72.5 & 78.6 & 66.8 & 67.1 & 72.2 & 71.7 & 65.2&70.3\\
&\cmark  &57.9 & 60.6 & 61.5 & 57.0 & 59.8 & 63.7 & 62.3 & 67.2 & 68.3 & 73.7 & 78.8 & 69.7 & 68.6 & 73.1 & 71.8 & 66.3&66.7\\
&Improve  &\textcolor{mycolor4}{$\bigtriangleup2.0$} & \textcolor{mycolor4}{$\bigtriangleup1.0$} & \textcolor{mycolor4}{$\bigtriangleup0.6$} & \textcolor{mycolor4}{$\bigtriangleup0.8$} & \textcolor{mycolor4}{$\bigtriangleup0.6$} & \textcolor{mycolor4}{$\bigtriangleup0.8$} & \textcolor{mycolor4}{$\bigtriangleup0.7$} & \textcolor{mycolor4}{$\bigtriangleup1.5$} & \textcolor{mycolor4}{$\bigtriangleup0.8$} & \textcolor{mycolor4}{$\bigtriangleup1.2$} & - & \textcolor{mycolor4}{$\bigtriangleup2.9$} & \textcolor{mycolor4}{$\bigtriangleup1.4$} & \textcolor{mycolor4}{$\bigtriangleup0.9$} & - & \cellcolor{gray!20}\textcolor{mycolor4}{$\bigtriangleup1.0$}&\cellcolor{gray!20}\textcolor{mycolor4}{$\bigtriangledown3.6$}\\
\midrule \multirow{3}{*}{SAR}
&\xmark   &52.0 & 54.8 & 56.2 & 50.2 & 52.3 & 56.1 & 52.8 & 50.8 & 26.5 & 0.1 & 3.1 & 0.1 & 0.1 & 0.1 & 0.3 & 30.4&80.8\\
&\cmark  &56.0 & 60.0 & 61.1 & 56.4 & 58.1 & 62.9 & 60.4 & 66.1 & 67.4 & 72.2 & 78.7 & 68.0 & 66.0 & 72.6 & 70.9 & 65.1&65.7\\
&Improve  &\textcolor{mycolor4}{$\bigtriangleup4.0$} & \textcolor{mycolor4}{$\bigtriangleup5.3$} & \textcolor{mycolor4}{$\bigtriangleup4.9$} & \textcolor{mycolor4}{$\bigtriangleup6.2$} & \textcolor{mycolor4}{$\bigtriangleup5.8$} & \textcolor{mycolor4}{$\bigtriangleup6.8$} & \textcolor{mycolor4}{$\bigtriangleup7.6$} & \textcolor{mycolor4}{$\bigtriangleup15.3$} & \textcolor{mycolor4}{$\bigtriangleup41.0$} & \textcolor{mycolor4}{$\bigtriangleup72.1$} & \textcolor{mycolor4}{$\bigtriangleup75.6$} & \textcolor{mycolor4}{$\bigtriangleup67.9$} & \textcolor{mycolor4}{$\bigtriangleup65.9$} & \textcolor{mycolor4}{$\bigtriangleup72.5$} & \textcolor{mycolor4}{$\bigtriangleup70.6$} &\cellcolor{gray!20} \textcolor{mycolor4}{$\bigtriangleup34.8$}&\cellcolor{gray!20}\textcolor{mycolor4}{$\bigtriangledown15.1$}\\
\midrule \multirow{3}{*}{COTTA}
&\xmark   &40.3 & 49.2 & 57.1 & 39.8 & 50.4 & 55.6 & 48.3 & 53.1 & 61.3 & 63.9 & 73.3 & 62.0 & 56.5 & 67.5 & 66.7 & 56.3&69.9\\
&\cmark  &49.7 & 61.7 & 64.2 & 45.7 & 57.0 & 59.0 & 51.1 & 58.2 & 63.1 & 66.0 & 73.4 & 62.9 & 58.0 & 68.9 & 68.2 & 60.5&65.7\\
&Improve  &\textcolor{mycolor4}{$\bigtriangleup9.4$} & \textcolor{mycolor4}{$\bigtriangleup12.4$} & \textcolor{mycolor4}{$\bigtriangleup7.1$} & \textcolor{mycolor4}{$\bigtriangleup5.9$} & \textcolor{mycolor4}{$\bigtriangleup6.6$} & \textcolor{mycolor4}{$\bigtriangleup3.4$} & \textcolor{mycolor4}{$\bigtriangleup2.8$} & \textcolor{mycolor4}{$\bigtriangleup5.1$} & \textcolor{mycolor4}{$\bigtriangleup1.8$} & \textcolor{mycolor4}{$\bigtriangleup2.1$} & - & \textcolor{mycolor4}{$\bigtriangleup1.0$} & \textcolor{mycolor4}{$\bigtriangleup1.5$} & \textcolor{mycolor4}{$\bigtriangleup1.3$} & \textcolor{mycolor4}{$\bigtriangleup1.5$} &\cellcolor{gray!20} \textcolor{mycolor4}{$\bigtriangleup4.1$}&\cellcolor{gray!20}\textcolor{mycolor4}{$\bigtriangledown4.2$}\\

\bottomrule
\end{tabular}
}
\end{table}

\textbf{Ablation study.} Finally, we conduct the ablation study on different components in our \texttt{COME}, i.e., with and without the uncertainty constraint in Eq.~\ref{constrained-op}. The experimental results are shown in Table \ref{tab:ablation study}, where SL indicates minimizing entropy of the subjective logic opinion and UC means the uncertainty constraint described by Eq.~\ref{constrained-op}. Compared with non-constrained optimization, naively minimizing the entropy of subjective opinion can only slightly improve uncertainty estimation performance. Combining with the uncertainty constraint, the average and worst-case accuracy can be both substantially improved, which indicates the optimal design of our \texttt{COME}.

\begin{figure}[!htbp]
  \begin{minipage}[t]{0.62\textwidth}
    \centering
    \includegraphics[width=\linewidth]{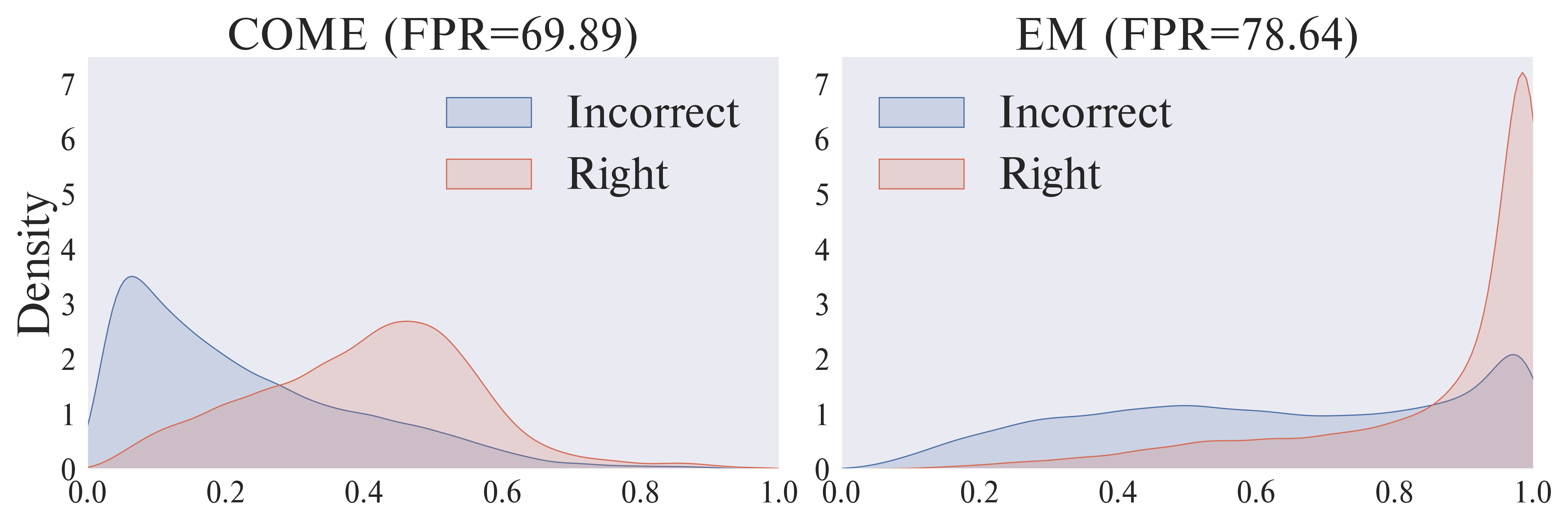}
    \vspace{-20pt}
    \captionof{figure}{Distribution of model confidence.}
    \label{fig:uncertainty}
  \end{minipage}
  \hfill
  \begin{minipage}[t]{0.38\textwidth}
  \setlength{\tabcolsep}{1.5mm}
    \centering
    \vspace{-2.6cm}
    \resizebox{\textwidth}{!}{
    \begin{tabular}{cc|cc|cc}
    
    \toprule   \multicolumn{2}{c}{} & \multicolumn{2}{c}{Acc$\uparrow$}  & \multicolumn{2}{c}{FPR$\downarrow$}  \\
       SL & UC & Mean &Worst & Mean &Worst \\

\midrule
\xmark & \xmark & 52.8 & 10.6 & 70.2 & 95.3 \\
\cmark & \xmark & 52.7 & 10.4 & 70.0 & 94.9 \\
\xmark & \cmark & 60.9 & 25.5 & 68.0 & 93.0 \\
\cmark & \cmark &\cellcolor{gray!20} 61.2 &\cellcolor{gray!20} 51.7 &\cellcolor{gray!20} 67.3 &\cellcolor{gray!20} 71.4 \\
    \bottomrule
    \end{tabular}
    }
    \captionof{table}{Ablation study.}
    \label{tab:ablation study}
  \end{minipage}
\end{figure}

\section{Conclusion}
In this paper, we propose a novel learning principle called \texttt{COME} to improve existing TTA methods. Our \texttt{COME} explicitly models the uncertainty raising upon unreliable test samples using the theory of evidence, and then regularizes the model to in favor of conservative prediction confidence during inference time. Our method takes inspiration from Bayesian framework, and consistently outperforms previous EM-based TTA methods on commonly-used benchmarks.


\appendix

\clearpage 
\newpage
\appendix

\hypersetup{ colorlinks = false, pdfborder = {0 0 0},
    linkbordercolor = {1 1 1} }

\startcontents[sections]
\printcontents[sections]{l}{1}{\section*{Appendices}\setcounter{tocdepth}{2}}

\section{Proofs}
To proof Lemma 1 and Theorem 1, we need the following lemma firstly.
\begin{lemma}
    Let $p, q$ be two real numbers. Assuming that $p\leq q$, then the $p$-norm (also called $\ell^p$-norm) and $q$-norm of vector $x=(x_1,\cdots,x_n)$ satisfied
    \begin{equation}
        ||x||_p\leq n^{(1/q-1/p)}||x||_q.
    \end{equation}
    where $n$ is the length of the vector.
\end{lemma}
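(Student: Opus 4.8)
The plan is to obtain the inequality from a single application of Hölder's inequality. Since $p \le q$, set $r := q/p \ge 1$ and let $r' := q/(q-p)$ be its conjugate exponent, so that $1/r + 1/r' = 1$; the degenerate case $p = q$ (where the inequality is an equality, with no $n$-factor) is handled separately. Writing each summand of $\sum_{i=1}^{n}|x_i|^p$ as $|x_i|^p\cdot 1$ and applying Hölder with the pair $(r,r')$ gives
\begin{equation}
\sum_{i=1}^{n}|x_i|^p \;\le\; \Bigl(\sum_{i=1}^{n}|x_i|^{pr}\Bigr)^{1/r}\Bigl(\sum_{i=1}^{n}1\Bigr)^{1/r'} \;=\; \Bigl(\sum_{i=1}^{n}|x_i|^{q}\Bigr)^{p/q}\, n^{(q-p)/q}.
\end{equation}
Taking the $p$-th root of both sides and simplifying the exponent of $n$ via $(q-p)/(pq)=1/p-1/q$ yields $\|x\|_p \le n^{1/p-1/q}\|x\|_q$, the desired comparison between the $p$- and $q$-norms.

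If a shorter write-up is preferred, I would instead invoke Jensen's inequality: the map $t\mapsto t^{q/p}$ is convex on $[0,\infty)$ because $q/p\ge 1$, so applying it to the uniform average of $|x_1|^p,\dots,|x_n|^p$ gives $\bigl(\tfrac1n\sum_{i=1}^n|x_i|^p\bigr)^{q/p}\le \tfrac1n\sum_{i=1}^n|x_i|^q$; multiplying by $n^{q/p}$ and taking $q$-th roots reproduces the same bound. Both arguments are two lines, so the choice between them is purely stylistic and I would pick whichever is most consistent with the rest of the appendix.

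I do not expect a genuine obstacle here; the only steps that need attention are the exponent bookkeeping (checking that $r'=q/(q-p)$ is the correct Hölder conjugate and that the power of $n$ collapses to $1/p-1/q$) and the corner cases $x=0$ (trivial) and $p=q$ (equality). It is also worth recording that the argument never uses $p\ge 1$, so the estimate is valid for all real $0<p\le q$. Downstream, this lemma feeds into Lemma~1 by applying it to the evidence vector, equivalently $\boldsymbol\alpha=\exp f(x)$, to pass between the Dirichlet strength $S=\sum_k\alpha_k$ and $\|f(x)\|_p$, after which Theorem~1 follows by combining Lemma~1 with the identity $\max_k p(y=k\mid x)=\alpha_{k^\star}/S$ and the constraint $|u-u_0|\le\delta$.
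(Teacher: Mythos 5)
Your Hölder argument (taking $|a_i|=|x_i|^p$, $|b_i|=1$, $r=q/p\ge 1$ and then extracting the $p$-th root) is exactly the route the paper's proof takes, and your exponent bookkeeping is correct, so the proposal is essentially identical to the paper's argument. One remark: you and the paper's own derivation both land on $\|x\|_p\le n^{1/p-1/q}\|x\|_q$, which is the inequality actually needed downstream (e.g.\ to get $K^{-1/p}\|f\|_p\le \|f\|_\infty$ in Lemma 1), so the exponent $1/q-1/p$ appearing in the lemma statement as printed is a sign typo rather than a defect in your proof.
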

\begin{proof} Recall H\"older's inequality
\begin{equation}
    \sum_{i=1}^n |a_i||b_i|\leq (\sum_{i=1}^n |a_i|^r)^{1/r}(\sum_{i=1}^n |b_i|^{\frac{r}{r-1}})^{1-\frac{1}{r}}.
\end{equation}
Apply this inequality to the case that $|a_i|=|x_i|^p$, $|b_i|=1$ and $r=q/p\geq 1$, we can derive to
\begin{equation}
    \sum_{i=1}^n |a_i||b_i|\leq (\sum_{i=1}^n((x_i)^p)^{\frac{q}{p}})^{\frac{p}{q}}(\sum_{i=1}^n 1^{\frac{q}{q-p}})^{1-{\frac{p}{q}}}=(\sum^{n}_{i=1}|x_i|^q)^{\frac{p}{q}}n^{1-\frac{p}{q}}.
\end{equation}
Then we have
\begin{equation}
\begin{split}
    ||x||_p&=(\sum_{i=1}^n |x_i|_p)^{1/p}\\
   & \leq ((\sum_{i=1}^n |x_i|^q)^{\frac{p}{q}} n^{1-\frac{p}{q}})^{1/p}\\
   &=(\sum_{i=1}^n|x_i|^q)^{\frac{1}{q}} n^{\frac{1}{p}-\frac{1}{q}}\\
   &=n^{1/p-1/q}||x||_q
\end{split}
\end{equation}
\end{proof}
Now we proceed to proof our main results.

\begin{proof}{{Proof of Lemma 1.}} Let $f_{\max}=\max\limits_{i}\ f_i(x)$, then we have
\begin{equation}
    \exp(f_{\max})\leq \sum_{i=1}^K \exp (f_i)\leq K \exp(f_{\max}),
\end{equation}
Applying the logarithm to the inequality, then
\begin{equation}
    f_{\max}\leq {\rm LSE}(f)\leq f_{\max}+\log K,
\end{equation}
where $\rm LSE$ is the shorthand of LogSumExp function, i.e., ${\rm LSE}(x):=\log\sum\limits_{i=1}^K \exp x_i$. 

Since we assume that all the elements in logits $x$ are all positive, then $f_{\max}=||f||_{\infty}$. Thus combining with Lemma 1 we can derive that
\begin{equation}
    K^{-1/p}||f||_p\leq{\rm LSE}(x)\leq ||f||_p+\log K,
\end{equation}
Noted that $u=K/{\rm LSE}(f)$, then we have
\begin{equation}
   \frac{K}{||f||_p+\log K}\leq u\leq\frac{ K^{1+1/p}}{||f||_p}.
\end{equation}
\end{proof}

\begin{proof}{Proof of Theorem 1.} Assuming that the uncertainty mass $u$ is constrained as

\begin{equation}
    u_0-\delta\leq u\leq u_0+\delta,
\end{equation}
then the ${\rm LSE}$ function of model output is also bounded by
\begin{equation}
    \frac{K}{u_0+\delta}\leq{\rm LSE}(f(x))\leq \frac{K}{u_0-\delta}.
\end{equation}
Noted that
\begin{equation}
    \max f(x)\leq {\rm LSE}(f(x)),
\end{equation}
and thus
\begin{equation}
    \max f(x)\leq \frac{K}{u_0-\delta}.
\end{equation}
According to Eq. 5, the model confidence is calculated by
\begin{equation}
    \max_k \mu_{k}(x)=\frac{\alpha_k}{S}=\frac{\exp f_{\max}}{\sum_{i=1}^K \exp f_i},
\end{equation}
where $\alpha_i =\exp f_i(x)$. 

Assuming the $f_j$ is the largest element in $f(x)$, then
\begin{equation}
    \begin{split}
        \max_k \mu_{k}(x)&=\frac{1}{1+\sum_{i=1,i\neq j}^K \exp {(f_i-f_{\max})}}
        \\
    &\leq \frac{1}{1+(K-1)\exp{(f_{\min}-f_{\max})}}
    \\
    &\leq \frac{1}{1+(K-1)\exp{(-\frac{K}{u_0-\delta})}}
    \end{split}
\end{equation}
\end{proof}

\section{Experimental details}

\subsection{Datasets}

\textbf{Covariate-shifted OOD generalization datasets.} We conduct experiments on ImageNet-C~\citep{hendrycks2018benchmarking}, which consists of 15 types of algorithmically generated corruptions from noise, blur, weather, and digital categories. Each type of corruption has 5 levels of severity, resulting in 75 distinct corruptions.

\textbf{Abnormal outliers for open-world TTA experiments.}
We follow the settings of~\citep{zhang2023openood}, where OpenImage-O~\citep{wang2022vim}, SSB-hard~\citep{vaze2021open}, Textures~\citep{cimpoi2014describing}, iNaturalist~\citep{van2018inaturalist} and NINCO~\citep{bitterwolf2023ninco} are selected as outliers for ImageNet. $\circ$ OpenImage-O contains 17632 manually filtered images and is 7.8 \( \times \) larger than the ImageNet dataset. $\circ$ SSB-hard is selected from ImageNet-21K. It consists of 49K images and 980 categories. $\circ$ Textures (Describable Textures Dataset, DTD) consists of 5,640 images depicting natural textures. $\circ$ iNaturalist consists of 859000 images from over 5000 different species of plants and animals. $\circ$ NINCO consists with a total of 5879 samples of 64 classes which are non-overlapped with ImageNet-C.

\subsection{Implementation details}
\textbf{Pretrained models.} The pretrained ViT model is ViT-Base~\citep{dosovitskiy2020vit}. The model is trained on the source ImageNet-1K training set and the model weights\footnote{https://huggingface.co/google/vit-base-patch16-224} are directly obtained from timm respository. Specifically, the pretrained ResNet model in  ~\ref{COMPARISON ON RESNET-50} is ResNet-50-BN~\citep{he2016deep}. The model is trained on the source ImageNet-1K training set and the model weights\footnote{https://download.pytorch.org/models/resnet50-19c8e357.pth} are directly obtained from torchvision library.

\textbf{TEA\footnote{https://github.com/yuanyige/tea}~\citep{yuan2023energy}} We follow all hyperparameters that are set in TEA unless it does not provide. Specifically, we use SGD as the update rule, with a momentum of 0.9, batch size of 64 and learning rate of 0.001/0.00025 for ViT/ResNet models. The trainable parameters are all affine parameters of layer/batch normalization layers for ViT/ResNet models.

\textbf{T3A\footnote{https://github.com/matsuolab/T3A}~\citep{iwasawa2021test}} We follow all hyperparameters that are set in T3A unless it does not provide. Specifically, the batch size is set to 64. The number of supports to restore M is set to 20 for all experiments.

\textbf{LAME\footnote{https://github.com/fiveai/LAME}~\citep{Boudiaf_2022_CVPR}} We follow all hyperparameters that are set in LAME unless it does not provide. For fair comparison, we maintain a consistent batch size of 64 for LAME, aligning it with the same batch size used by other methods in our evaluation. We use the kNN affinity matrix with the value of $k=5$.

\textbf{FOA\footnote{https://github.com/mr-eggplant/FOA}~\citep{niu2024test}} We follow all hyperparameters that are set in FOA unless it does not provide. Specifically, the batch size is set to 64. The number of supports to restore M is set to 20 for all experiments.

\textbf{Tent\footnote{https://github.com/DequanWang/tent}~\citep{wang2020tent}} We follow all hyperparameters that are set in Tent unless it does not provide. Specifically, we use SGD as the update rule, with a momentum of 0.9, batch size of 64 and learning rate of 0.001/0.00025 for ViT/ResNet models. The trainable parameters are all affine parameters of layer/batch normalization layers for ViT/ResNet models. 

\textbf{EATA\footnote{https://github.com/mr-eggplant/EATA}~\citep{niu2022efficient}} We follow all hyperparameters that are set in EATA. Specifically, the entropy constant $E_0$ (for reliable sample identification) is set to $0.4\times ln1000$, where $1000$ is the number of task classes. The $\epsilon$ for redundant sample identification is set to 0.05. The trade-off parameter $\beta$ for entropy loss and regularization loss is set to 2,000. The number of pre-collected in-distribution test samples for Fisher importance calculation is 2,000. We use SGD as the update rule, with a momentum of 0.9, batch size of 64 and learning rate of 0.001/0.00025 for ViT/ResNet models. The trainable parameters are all affine parameters of layer/batch normalization layers for ViT/ResNet models.

\textbf{SAR\footnote{https://github.com/mr-eggplant/SAR}~\citep{niu2023towards}} We follow all hyperparameters that are set in SAR. Specifically, the entropy threshold $E_0$ is set to $0.4\times ln1000$, where $1000$ is the number of task classes. We use SGD as the update rule, with a momentum of 0.9, batch size of 64 and learning rate of 0.001/0.00025 for ViT/ResNet models. For model recovery, we follow all strategy that are set in SAR(except for the experiments of life-long).The trainable parameters are all affine parameters of layer/batch normalization layers for ViT/ResNet models. 

\textbf{CoTTA\footnote{https://github.com/qinenergy/cotta}~\citep{wang2022continual}} We follow all hyperparameters that are set in CoTTA unless it does not provide. Specifically, we use SGD as the update rule, with a momentum of 0.9, batch size of 64 and learning rate of 0.001/0.01 for ViT/ResNet models. The augmentation threshold $p_{th}$ is set to 0.1. For images below threshold, we conduct 32 augmentations including color jitter, random affine, Gaussian blur, random horizonal flip, and Gaussian noise. The restoration probability of is set to 0.01 and the EMA factor $\alpha$ for teacher update is set to 0.999. The trainable parameters are all affine parameters of layer/batch normalization layers for ViT/ResNet models. 

\textbf{MEMO\footnote{https://github.com/zhangmarvin/memo}~\citep{zhang2022memo}} We follow all hyperparameters that are set in MEMO. Specifically, we use the AugMix\footnote{https://github.com/google-research/augmix}~\citep{hendrycks2020augmix} as a set of data augmentations and the augmentation size is set to 32. We use SGD as the optimizer,with learning rate 0.00025 and no weight decay. The trainable parameters are the entire model.

\section{Additional results}
\subsection{In-Distribution Performance}
We compare the In-Distribution Performance of proposed \texttt{COME} to EM-based methods introduced by~\citep{niu2024test}. As shown in Table \ref{tab:IN-DISTRIBUTION PERFORMANCE}, our method consistently improves the performance of all EM methods on classification tasks.


\begin{table}[htbp]
\vspace{-5pt}
    \centering
    \caption{Comparison w.r.t. \textbf{in-distribution performance}, $i.e.$, on clean/original ImageNet validation set, with ViT as the base model. Substantial ($\geq 0.5$) \textcolor{mycolor4}{improvement} and \textcolor{mycolor3}{degradation} compared to the baseline are highlighted in blue or red respectively.}
    \label{tab:IN-DISTRIBUTION PERFORMANCE}
    \resizebox{0.7\textwidth}{!}{
    \setlength{\tabcolsep}{2.5mm}
    \begin{tabular}{c|c|c|c|c|c|c}
    \toprule
       & TENT& SAR &EATA &CoTTA&MEMO   &Avg. \\ \midrule\texttt{COME} & Acc$\uparrow$  & Acc$\uparrow$  & Acc $\uparrow$ & Acc $\uparrow$ & Acc$\uparrow$  & Acc$\uparrow$  \\

\midrule
\xmark   &81.4&80.7&81.3&82.1&80.3&81.2\\
\cmark  &83.1&83.1&83.1&82.8&80.6&82.6\\
Improve  &\textcolor{mycolor4}{$\bigtriangleup1.7$} &\textcolor{mycolor4}{$\bigtriangleup2.5$} &\textcolor{mycolor4}{$\bigtriangleup1.8$} &\textcolor{mycolor4}{$\bigtriangleup0.7$} &-&\textcolor{mycolor4}{$\bigtriangleup1.4$} \\

    \bottomrule
    \end{tabular}
    }\vspace{-5pt}
    \end{table}
    
\subsection{Comparison on ResNet-50}
\label{COMPARISON ON RESNET-50}
As shown in previous work~\citep{niu2023towards}, the TTA performance can varying a lot from different model architecture, especially various types of normalization layers, i.e., batch normalization, group normalization, layer normalization and instance normalization. Thus we conduct additional experiments to further evaluate the proposed method on resnet-50 with batch normalization layers under open-world TTA settings. The experimental results in Table~\ref{tab:COMPARISON ON RESNET-50} show that our \texttt{COME} still achieves superior performance compared to EM learning principle when equipped to Tent.
\begin{table}[htbp]
\vspace{-5pt}
\centering
    \caption{Classification and uncertainty estimation comparisons under \textbf{open-world} TTA settings with \textbf{ResNet-50-BN}, where $P^{\rm test}=0.5 P^{\rm Cov}+0.5 P^{\rm Sem}$(Gaussian noise of severity level 3) and a suit of diverse abnormal outliers as same with Table \ref{tab:openworld}. Substantial ($\geq 0.5$) \textcolor{mycolor4}{improvement} and \textcolor{mycolor3}{degradation} compared to the baseline are highlighted in blue or red respectively.}
    \label{tab:COMPARISON ON RESNET-50}
    \resizebox{\textwidth}{!}{
    \setlength{\tabcolsep}{1.5mm}
    \begin{tabular}{cc|cc|cc|cc|cc|cc|cc|cc}
    \toprule
      & & \multicolumn{2}{c}{None} & \multicolumn{2}{c}{NINCO} & \multicolumn{2}{c}{iNaturist} & \multicolumn{2}{c}{SSB-Hard} & \multicolumn{2}{c}{Texture} & \multicolumn{2}{c}{Places}  & \multicolumn{2}{c}{Avg.} \\ \midrule Method&\texttt{COME} 
      & Acc$\uparrow$ &FPR$\downarrow$ & Acc$\uparrow$ &FPR$\downarrow$ & Acc$\uparrow$ &FPR$\downarrow$ & Acc$\uparrow$ &FPR$\downarrow$ & Acc$\uparrow$ &FPR$\downarrow$ & Acc$\uparrow$ &FPR$\downarrow$ & Acc$\uparrow$ &FPR$\downarrow$ \\
      \midrule 
No Adapt
& \xmark &3.0&81.6&3.0&91.2&3.0&89.4&3.0&90.2&3.0&88.3&2.8&90.7&3.0&88.6\\
\midrule
PL
& \xmark &26.9&71.2&16.1&84.3&12.9&88.4&15.9&87.8&18.1&86.1&16.9&82.9&17.8&83.5\\

TEA
& \xmark &28.5&73.5&17.6&84.0&9.6&84.5&11.8&88.3&19.9&84.2&16.8&82.9&17.4&82.9\\

\midrule \multirow{3}{*}{Tent}
&\xmark   &52.5&67.5&43.7&79.6&52.1&78.3&51.9&82.1&44.4&78.6&48.7&73.2&48.9&76.6\\
&\cmark  &55.0&67.6&46.3&75.5&54.3&75.1&54.4&81.9&45.8&75.6&50.8&64.0&51.1&73.3\\
&Improve  &\textcolor{mycolor4}{$\bigtriangleup2.6$} & -  &\textcolor{mycolor4}{$\bigtriangleup2.6$} &\textcolor{mycolor4}{$\bigtriangledown4.2$} &\textcolor{mycolor4}{$\bigtriangleup2.2$} &\textcolor{mycolor4}{$\bigtriangledown3.2$} &\textcolor{mycolor4}{$\bigtriangleup2.5$} & -  &\textcolor{mycolor4}{$\bigtriangleup1.4$} &\textcolor{mycolor4}{$\bigtriangledown3.1$} &\textcolor{mycolor4}{$\bigtriangleup2.2$} &\textcolor{mycolor4}{$\bigtriangledown9.2$} &\textcolor{mycolor4}{$\bigtriangleup2.2$} &\textcolor{mycolor4}{$\bigtriangledown3.3$}\\

\midrule \multirow{3}{*}{EATA}
&\xmark   &55.9&68.2&47.8&80.8&53.1&78.4&52.2&82.0&48.7&75.3&49.3&74.5&51.2&76.5\\
&\cmark  &58.0&66.2&52.9&74.8&57.6&73.2&57.4&81.3&52.5&70.7&55.4&62.9&55.6&71.5\\
&Improve  &\textcolor{mycolor4}{$\bigtriangleup2.0$} &\textcolor{mycolor4}{$\bigtriangledown2.0$} &\textcolor{mycolor4}{$\bigtriangleup5.1$} &\textcolor{mycolor4}{$\bigtriangledown6.0$} &\textcolor{mycolor4}{$\bigtriangleup4.5$} &\textcolor{mycolor4}{$\bigtriangledown5.1$} &\textcolor{mycolor4}{$\bigtriangleup5.2$} &\textcolor{mycolor4}{$\bigtriangledown0.7$} &\textcolor{mycolor4}{$\bigtriangleup3.9$} &\textcolor{mycolor4}{$\bigtriangledown4.6$} &\textcolor{mycolor4}{$\bigtriangleup6.0$} &\textcolor{mycolor4}{$\bigtriangledown11.6$} &\textcolor{mycolor4}{$\bigtriangleup4.5$} &\textcolor{mycolor4}{$\bigtriangledown5.0$}\\
\midrule \multirow{3}{*}{SAR}
&\xmark   &51.8&64.6&42.4&78.3&47.6&81.3&48.1&84.4&42.7&79.1&46.0&76.7&46.4&77.4\\
&\cmark  &56.3&64.0&46.7&77.9&55.3&77.1&55.1&81.6&46.4&77.5&52.5&68.1&52.0&74.4\\
&Improve  &\textcolor{mycolor4}{$\bigtriangleup4.5$} &\textcolor{mycolor4}{$\bigtriangledown0.6$} &\textcolor{mycolor4}{$\bigtriangleup4.3$} &\textcolor{mycolor4}{$\bigtriangledown0.3$} &\textcolor{mycolor4}{$\bigtriangleup7.7$} &\textcolor{mycolor4}{$\bigtriangledown4.2$} &\textcolor{mycolor4}{$\bigtriangleup6.9$} &\textcolor{mycolor4}{$\bigtriangledown2.8$} &\textcolor{mycolor4}{$\bigtriangleup3.7$} &\textcolor{mycolor4}{$\bigtriangledown1.6$} &\textcolor{mycolor4}{$\bigtriangleup6.5$} &\textcolor{mycolor4}{$\bigtriangledown8.6$} &\textcolor{mycolor4}{$\bigtriangleup5.6$} &\textcolor{mycolor4}{$\bigtriangledown3.0$}\\

\midrule \multirow{3}{*}{COTTA}
&\xmark   &22.6&70.7&14.4&87.4&21.1&78.6&19.7&84.1&15.5&87.3&15.8&82.0&18.2&81.7\\
&\cmark  &24.5&69.4&14.7&86.1&21.4&81.6&19.4&86.1&16.0&85.7&16.4&82.6&18.7&81.9\\
&Improve  &\textcolor{mycolor4}{$\bigtriangleup1.8$} &\textcolor{mycolor4}{$\bigtriangledown1.3$} & - &\textcolor{mycolor4}{$\bigtriangledown1.3$} & - &\textcolor{mycolor1}{$\bigtriangleup2.9$} & - &\textcolor{mycolor1}{$\bigtriangleup2.0$} &\textcolor{mycolor4}{$\bigtriangleup0.5$} &\textcolor{mycolor4}{$\bigtriangledown1.6$} &\textcolor{mycolor4}{$\bigtriangleup0.6$} &\textcolor{mycolor1}{$\bigtriangleup0.7$} &\textcolor{mycolor4}{$\bigtriangleup0.5$} & - \\

\midrule \multirow{3}{*}{MEMO}
&\xmark   &8.0&83.6&7.5&89.0&7.9&87.9&7.9&89.8&7.8&88.6&7.7&88.4&7.8&87.9\\
&\cmark  &9.1&77.9&8.7&90.2&9.0&87.3&9.1&89.2&9.1&87.4&8.7&88.8&9.0&86.8\\
&Improve  &\textcolor{mycolor4}{$\bigtriangleup1.1$} &\textcolor{mycolor4}{$\bigtriangledown5.7$} &\textcolor{mycolor4}{$\bigtriangleup1.2$} &\textcolor{mycolor1}{$\bigtriangleup1.2$} &\textcolor{mycolor4}{$\bigtriangleup1.2$} &\textcolor{mycolor4}{$\bigtriangledown0.7$} &\textcolor{mycolor4}{$\bigtriangleup1.2$} &\textcolor{mycolor4}{$\bigtriangledown0.6$} &\textcolor{mycolor4}{$\bigtriangleup1.3$} &\textcolor{mycolor4}{$\bigtriangledown1.2$} &\textcolor{mycolor4}{$\bigtriangleup1.1$} & - &\textcolor{mycolor4}{$\bigtriangleup1.2$} &\textcolor{mycolor4}{$\bigtriangledown1.1$}\\

    \bottomrule
    \end{tabular}
    }\vspace{-5pt}
    \end{table}

\subsection{Time-consuming comparison} We compare the time-cost of proposed \texttt{COME} to EM-based methods and Non-EM based methods in Table \ref{tab:TIME AND SIZE}. We run all the experiments on one single NVIDIA 4090 GPU. Our \texttt{COME} does not introduce noticeably extra cost of computation.
\begin{table}[htbp]
\vspace{-5pt}
\centering
    \caption{Comparisons w.r.t. computation complexity. Accuracy (\%) and FPR (\%) are average results on ImageNet-C (level 5) with ViT-Base. The Wall-Clock Time (seconds) and Memory Usage (MB) are measured for processing 50,000 images of ImageNet-C on a single RTX 4090 GPU.}
    \label{tab:TIME AND SIZE}
    \resizebox{0.50\textwidth}{!}{  
    \setlength{\tabcolsep}{1mm}    
    \begin{tabular}{cc|cc|cc}
    \toprule
       Method&\texttt{COME}  &Acc $\uparrow$&FPR $\downarrow$&Memory &Run Time \\
      \midrule 
No Adapt
&\xmark   &39.8&67.5&853&59\\
\midrule 
LAME
&\xmark   &38.7&69.7 &853&62\\
T3A
&\xmark   &41.0&67.7&984&179\\

PL
&\xmark   &51.3&69.1&6393&128\\
FOA
&\xmark   &53.7&63.6 &869&1687\\
TEA
&\xmark   &46.9&68.3&17266&2865\\

\midrule \multirow{3}{*}{Tent}
&\xmark   &52.8&70.1&6393&129\\
&\cmark  &61.2&66.5&6393&130\\
&Improve  &\textcolor{mycolor4}{$\bigtriangleup8.4$}&\textcolor{mycolor4}{$\bigtriangledown3.6$}
&-&-\\
\midrule \multirow{3}{*}{EATA}
&\xmark   &62.1&65.1&6394&135\\
&\cmark  &64.5&63.8&6394&134\\
&Improve  &\textcolor{mycolor4}{$\bigtriangleup2.4$}&\textcolor{mycolor4}{$\bigtriangledown1.3$}&-&-\\
\midrule \multirow{3}{*}{SAR}
&\xmark   &54.2&66.7&6393&253\\
&\cmark  &64.2&63.8&6393&254\\
&Improve  &\textcolor{mycolor4}{$\bigtriangleup10.1$}&\textcolor{mycolor4}{$\bigtriangledown2.9$}
&-&-\\

\midrule \multirow{3}{*}{COTTA}
&\xmark   &46.1&67.9&19612&738\\
&\cmark  &49.1&67.5&19611&739\\
&Improve  &\textcolor{mycolor4}{$\bigtriangleup3.0$}&\textcolor{mycolor4}{$\bigtriangledown0.3$}&-&-\\
\midrule \multirow{3}{*}{MEMO}
&\xmark   &42.3&72.1&5392&20576\\
&\cmark  &43.2&70.8&5392&20530\\
&Improve  &\textcolor{mycolor4}{$\bigtriangleup0.9$}&\textcolor{mycolor4}{$\bigtriangledown1.3$}&-&-\\

    \bottomrule
    \end{tabular}
    }\vspace{-5pt}
    \end{table}
\subsection{Comparison under mixed distributional shifts.} We evaluate the proposed \texttt{COME} in two additional settings introduced by~\citep{niu2023towards}. These scenarios include 1) online imbalanced label distribution shifts, where the test data are sorted by class, and 2) mixed domain shifts, where the test data stream includes several randomly mixed domains with different distribution shifts. As shown in Table \ref{tab:imbalanced label distribution shifts}, our \texttt{COME} consistently outperform EM when equipped with all EM methods. 

As shown in Table \ref{tab:mixed domain shifts}, our method consistently improves the performance of all EM methods on both classification and uncertainty estimation tasks, with the exception of a very slight decrease in the uncertainty estimation task of CoTTA.
\begin{table}[htbp]
\centering
\vspace{-5pt}
\caption{Comparison w.r.t. imbalanced label shifts performance. Results obtained on ViT and ImageNet-C (level 5) under \textbf{imbalanced label shifts} TTA setting, where the imbalance ratio is $\infty$. Substantial ($\geq 0.5$) \textcolor{mycolor4}{improvement} and \textcolor{mycolor3}{degradation} compared to the baseline are highlighted in blue or red respectively.}
\label{tab:imbalanced label distribution shifts}
\resizebox{\textwidth}{!}{
\setlength{\tabcolsep}{1mm}
\begin{tabular}{cc|ccc|cccc|cccc|cccc|c}
\toprule
 & &  \multicolumn{3}{c}{Noise} & \multicolumn{4}{c}{Blur} & \multicolumn{4}{c}{Weather} & \multicolumn{4}{c}{Digital} & \multicolumn{1}{c}{Avg.}  \\ 
\cmidrule(r){3-5} \cmidrule(r){6-9} \cmidrule(r){10-13} \cmidrule(r){14-17} \cmidrule(r){18-18}  
 Methods &\texttt{COME} & Gauss. & Shot & Impul. & Defoc & Glass & Motion & Zoom & Snow & Frost &Fog & Brit. &Contr. &Elast. &Pixel & JPEG & Acc$\uparrow$ \\
 \toprule
No Adapt
&\xmark   &35.1 & 32.2 & 35.9 & 31.4 & 25.3 & 39.4 & 31.6 & 24.5 & 30.1 & 54.7 & 64.5 & 49.0 & 34.2 & 53.2 & 56.5 & 39.8\\
\midrule
PL
&\xmark   &49.7 & 48.6 & 50.9 & 49.8 & 41.5 & 53.0 & 41.9 & 26.6 & 49.0 & 64.3 & 73.6 & 65.6 & 45.2 & 63.9 & 63.0 & 52.4\\

T3A
&\xmark   &33.4 & 30.3 & 34.2 & 31.3 & 26.8 & 38.7 & 32.1 & 25.1 & 29.3 & 54.5 & 62.8 & 48.8 & 37.4 & 51.9 & 56.2 & 39.5\\

TEA
&\xmark   &44.9 & 40.3 & 46.3 & 39.8 & 35.2 & 46.0 & 12.1 & 14.3 & 46.9 & 60.3 & 72.7 & 60.2 & 48.6 & 62.7 & 58.8 & 45.9\\

LAME
&\xmark   &47.0 & 43.3 & 48.2 & 39.8 & 31.8 & 50.3 & 39.4 & 30.5 & 37.1 & 66.0 & 75.4 & 63.5 & 42.0 & 65.1 & 68.1 & 49.8\\

FOA
&\xmark   &41.5 & 39.2 & 43.6 & 42.5 & 33.7 & 45.5 & 41.0 & 44.9 & 44.5 & 60.1 & 67.7 & 58.8 & 45.7 & 57.3 & 62.7 & 48.6\\

\midrule \multirow{3}{*}{Tent}
&\xmark   &52.4 & 51.9 & 53.3 & 53.8 & 48.1 & 57.0 & 46.2 & 10.3 & 53.5 & 67.9 & 74.2 & 67.1 & 52.3 & 66.5 & 64.9 & 54.6\\
&\cmark  &55.0 & 55.0 & 56.2 & 57.1 & 54.6 & 61.6 & 49.3 & 62.9 & 64.0 & 72.3 & 78.1 & 69.3 & 62.7 & 71.3 & 69.0 & 62.6\\
&Improve  &\textcolor{mycolor4}{$\bigtriangleup2.5$} & \textcolor{mycolor4}{$\bigtriangleup3.2$} & \textcolor{mycolor4}{$\bigtriangleup2.9$} & \textcolor{mycolor4}{$\bigtriangleup3.4$} & \textcolor{mycolor4}{$\bigtriangleup6.5$} & \textcolor{mycolor4}{$\bigtriangleup4.6$} & \textcolor{mycolor4}{$\bigtriangleup3.1$} & \textcolor{mycolor4}{$\bigtriangleup52.6$} & \textcolor{mycolor4}{$\bigtriangleup10.6$} & \textcolor{mycolor4}{$\bigtriangleup4.4$} & \textcolor{mycolor4}{$\bigtriangleup4.0$} & \textcolor{mycolor4}{$\bigtriangleup2.2$} & \textcolor{mycolor4}{$\bigtriangleup10.4$} & \textcolor{mycolor4}{$\bigtriangleup4.9$} & \textcolor{mycolor4}{$\bigtriangleup4.1$} & \textcolor{mycolor4}{$\bigtriangleup7.9$}\\
\midrule \multirow{3}{*}{SAR}
&\xmark   &51.8 & 51.7 & 52.7 & 51.9 & 48.2 & 55.6 & 47.8 & 20.3 & 52.9 & 66.8 & 73.2 & 66.0 & 52.2 & 64.1 & 62.8 & 54.5\\
&\cmark  &56.0 & 56.0 & 57.2 & 58.0 & 56.3 & 62.3 & 54.1 & 64.0 & 64.3 & 72.4 & 78.3 & 69.6 & 64.0 & 71.5 & 69.1 & 63.5\\
&Improve  &\textcolor{mycolor4}{$\bigtriangleup4.2$} & \textcolor{mycolor4}{$\bigtriangleup4.4$} & \textcolor{mycolor4}{$\bigtriangleup4.5$} & \textcolor{mycolor4}{$\bigtriangleup6.2$} & \textcolor{mycolor4}{$\bigtriangleup8.1$} & \textcolor{mycolor4}{$\bigtriangleup6.6$} & \textcolor{mycolor4}{$\bigtriangleup6.3$} & \textcolor{mycolor4}{$\bigtriangleup43.8$} & \textcolor{mycolor4}{$\bigtriangleup11.4$} & \textcolor{mycolor4}{$\bigtriangleup5.7$} & \textcolor{mycolor4}{$\bigtriangleup5.1$} & \textcolor{mycolor4}{$\bigtriangleup3.6$} & \textcolor{mycolor4}{$\bigtriangleup11.8$} & \textcolor{mycolor4}{$\bigtriangleup7.4$} & \textcolor{mycolor4}{$\bigtriangleup6.2$} & \textcolor{mycolor4}{$\bigtriangleup9.0$}\\
\midrule \multirow{3}{*}{EATA}
&\xmark   &52.0 & 53.6 & 53.9 & 49.3 & 49.5 & 54.4 & 55.6 & 58.1 & 56.9 & 69.6 & 74.9 & 63.6 & 61.1 & 68.0 & 64.2 & 59.0\\
&\cmark  &54.9 & 56.4 & 54.7 & 56.5 & 56.3 & 62.1 & 59.0 & 67.0 & 65.4 & 73.4 & 78.4 & 68.0 & 68.0 & 73.0 & 70.4 & 64.2\\
&Improve  &\textcolor{mycolor4}{$\bigtriangleup2.8$} & \textcolor{mycolor4}{$\bigtriangleup2.8$} & \textcolor{mycolor4}{$\bigtriangleup0.8$} & \textcolor{mycolor4}{$\bigtriangleup7.3$} & \textcolor{mycolor4}{$\bigtriangleup6.8$} & \textcolor{mycolor4}{$\bigtriangleup7.7$} & \textcolor{mycolor4}{$\bigtriangleup3.4$} & \textcolor{mycolor4}{$\bigtriangleup8.9$} & \textcolor{mycolor4}{$\bigtriangleup8.5$} & \textcolor{mycolor4}{$\bigtriangleup3.9$} & \textcolor{mycolor4}{$\bigtriangleup3.5$} & \textcolor{mycolor4}{$\bigtriangleup4.4$} & \textcolor{mycolor4}{$\bigtriangleup6.8$} & \textcolor{mycolor4}{$\bigtriangleup5.0$} & \textcolor{mycolor4}{$\bigtriangleup6.1$} & \textcolor{mycolor4}{$\bigtriangleup5.3$}\\
\midrule \multirow{3}{*}{COTTA}
&\xmark   &42.9 & 40.0 & 44.6 & 36.0 & 29.7 & 44.8 & 37.2 & 42.3 & 46.4 & 60.7 & 72.9 & 65.0 & 45.4 & 61.6 & 62.9 & 48.8\\
&\cmark  &51.6 & 49.0 & 52.9 & 41.7 & 37.0 & 51.6 & 43.8 & 46.7 & 53.2 & 65.9 & 74.4 & 65.6 & 52.8 & 66.7 & 65.9 & 54.6\\
&Improve  &\textcolor{mycolor4}{$\bigtriangleup8.6$} & \textcolor{mycolor4}{$\bigtriangleup9.0$} & \textcolor{mycolor4}{$\bigtriangleup8.3$} & \textcolor{mycolor4}{$\bigtriangleup5.7$} & \textcolor{mycolor4}{$\bigtriangleup7.4$} & \textcolor{mycolor4}{$\bigtriangleup6.8$} & \textcolor{mycolor4}{$\bigtriangleup6.6$} & \textcolor{mycolor4}{$\bigtriangleup4.4$} & \textcolor{mycolor4}{$\bigtriangleup6.9$} & \textcolor{mycolor4}{$\bigtriangleup5.2$} & \textcolor{mycolor4}{$\bigtriangleup1.5$} & \textcolor{mycolor4}{$\bigtriangleup0.5$} & \textcolor{mycolor4}{$\bigtriangleup7.4$} & \textcolor{mycolor4}{$\bigtriangleup5.1$} & \textcolor{mycolor4}{$\bigtriangleup3.0$} & \textcolor{mycolor4}{$\bigtriangleup5.8$}\\
\midrule \multirow{3}{*}{MEMO}
&\xmark   &39.7 & 36.5 & 39.8 & 32.4 & 25.8 & 40.3 & 34.7 & 27.5 & 32.8 & 53.5 & 66.2 & 56.0 & 35.7 & 55.9 & 58.2 & 42.3\\
&\cmark  &40.6 & 37.5 & 40.6 & 33.4 & 26.7 & 41.2 & 35.4 & 28.7 & 33.7 & 54.7 & 67.1 & 55.9 & 36.6 & 57.2 & 59.2 & 43.2\\
&Improve  &\textcolor{mycolor4}{$\bigtriangleup0.8$} & \textcolor{mycolor4}{$\bigtriangleup1.0$} & \textcolor{mycolor4}{$\bigtriangleup0.8$} & \textcolor{mycolor4}{$\bigtriangleup1.0$} & \textcolor{mycolor4}{$\bigtriangleup0.9$} & \textcolor{mycolor4}{$\bigtriangleup1.0$} & \textcolor{mycolor4}{$\bigtriangleup0.7$} & \textcolor{mycolor4}{$\bigtriangleup1.2$} & \textcolor{mycolor4}{$\bigtriangleup0.9$} & \textcolor{mycolor4}{$\bigtriangleup1.2$} & \textcolor{mycolor4}{$\bigtriangleup0.8$} & -& \textcolor{mycolor4}{$\bigtriangleup0.9$} & \textcolor{mycolor4}{$\bigtriangleup1.3$} & \textcolor{mycolor4}{$\bigtriangleup1.1$} & \textcolor{mycolor4}{$\bigtriangleup0.9$}\\
\bottomrule
\end{tabular}
}\vspace{-5pt}
\end{table}

\begin{table}[htbp]
    \centering
    \vspace{-5pt}
    \caption{Comparison w.r.t. mixed shifts performance. Results obtained on ViT and ImageNet-C (level 5) under \textbf{mixed shifts} TTA setting, the performance is evaluated on a single data stream consisting of 15 mixed corruptions. Substantial ($\geq 0.5$) \textcolor{mycolor4}{improvement} and \textcolor{mycolor3}{degradation} compared to the baseline are highlighted in blue or red respectively.}
    \label{tab:mixed domain shifts}

    \resizebox{\textwidth}{!}{
    \setlength{\tabcolsep}{2.5mm}
    \begin{tabular}{c|cc|cc|cc|cc|cc}
    \toprule
       & \multicolumn{2}{c}{TENT} & \multicolumn{2}{c}{SAR} & \multicolumn{2}{c}{EATA} & \multicolumn{2}{c}{CoTTA}&  \multicolumn{2}{c}{Avg.} \\ \midrule\texttt{COME} & Acc$\uparrow$ &FPR$\downarrow$ & Acc$\uparrow$ &FPR$\downarrow$ & Acc$\uparrow$ &FPR$\downarrow$ & Acc$\uparrow$ &FPR$\downarrow$ & Acc$\uparrow$ &FPR$\downarrow$  \\

\midrule

\xmark   &58.0&72.3&53.6&68.2&58.8&71.3&62.0&69.7&58.1&70.4\\
\cmark  &61.2&67.9&62.3&66.9&61.8&67.0&65.1&70.7&62.6&68.1\\
Improve  &\textcolor{mycolor4}{$\bigtriangleup3.2$} &\textcolor{mycolor4}{$\bigtriangledown4.4$} &\textcolor{mycolor4}{$\bigtriangleup8.6$} &\textcolor{mycolor4}{$\bigtriangledown1.3$} &\textcolor{mycolor4}{$\bigtriangleup3.0$} &\textcolor{mycolor4}{$\bigtriangledown4.3$} &\textcolor{mycolor4}{$\bigtriangleup3.1$} &\textcolor{mycolor1}{$\bigtriangleup0.9$} &\textcolor{mycolor4}{$\bigtriangleup4.5$} &\textcolor{mycolor4}{$\bigtriangledown2.3$}\\

    \bottomrule
    \end{tabular}
    }\vspace{-5pt}
    \end{table}

\vspace{7em}

\subsection{Additional results w.r.t FPR (supplementary to Table  \ref{tab:standard})  }
\label{fprs}
We supplement the uncertainty estimation performance from Table \ref{tab:standard}, as shown in Tables \ref{tab:fprs}. It is evident that our \texttt{COME} method excels in the classification task while simultaneously improving the average uncertainty estimation performance of the  EM-based methods.

\begin{table}[htbp]
\vspace{-5pt}
\centering
\caption{Uncertainty estimation performance comparison \textbf{(FPR)} on ImageNet-C(severity level5) as the full result of Table \ref{tab:standard}. Substantial ($\geq 0.5$) \textcolor{mycolor4}{improvement} and \textcolor{mycolor3}{degradation} compared to the baseline are highlighted in blue or red respectively.}
\label{tab:fprs}
\resizebox{\textwidth}{!}{
\setlength{\tabcolsep}{1mm}
\begin{tabular}{cc|ccc|cccc|cccc|cccc|c}
\toprule
 & & \multicolumn{3}{c}{Noise} & \multicolumn{4}{c}{Blur} & \multicolumn{4}{c}{Weather} & \multicolumn{4}{c}{Digital} & \multicolumn{1}{c}{}  \\ 
\cmidrule(r){3-5} \cmidrule(r){6-9} \cmidrule(r){10-13} \cmidrule(r){14-17} \cmidrule(r){18-18}  
 Methods & UEM & Gauss. & Shot & Impul. & Defoc & Glass & Motion & Zoom & Snow & Frost &Fog & Brit. &Contr. &Elast. &Pixel & JPEG & Avg. \\

\midrule No Adapt

&\xmark   &64.6 & 65.8 & 67.1 & 68.6 & 69.7 & 66.6 & 73.0 & 69.1 & 64.1 & 66.4 & 63.2 & 73.4 & 70.9 & 64.0 & 66.1 & 67.5\\
\midrule
PL
&\xmark   &66.5 & 65.9 & 66.6 & 67.3 & 70.4 & 67.6 & 71.5 & 88.4 & 78.2 & 64.9 & 63.1 & 63.9 & 70.3 & 64.7 & 67.3 & 69.1\\

T3A
&\xmark   &67.2 & 68.2 & 68.5 & 64.5 & 69.8 & 66.3 & 70.3 & 66.2 & 66.7 & 63.3 & 68.7 & 68.3 & 67.8 & 69.6 & 69.9 & 67.7\\

TEA
&\xmark   &67.2 & 68.3 & 67.2 & 70.0 & 69.3 & 67.1 & 68.5 & 94.0 & 65.2 & 63.9 & 62.8 & 65.2 & 65.6 & 63.7 & 65.9 & 68.3\\

LAME
&\xmark   &65.0 & 66.5 & 67.6 & 70.3 & 71.9 & 67.9 & 75.8 & 73.1 & 66.9 & 70.3 & 63.6 & 80.7 & 74.1 & 64.6 & 67.2 & 69.7\\

FOA
&\xmark   &65.6 & 65.0 & 65.4 & 66.3 & 65.4 & 63.9 & 68.1 & 60.2 & 62.1 & 60.2 & 61.3 & 60.6 & 63.7 & 62.1 & 63.6 & 63.6\\

\midrule \multirow{3}{*}{Tent}
&\xmark   &67.6 & 67.2 & 67.2 & 67.1 & 68.7 & 67.7 & 72.2 & 95.1 & 87.7 & 62.7 & 62.3 & 65.2 & 68.7 & 65.3 & 67.0 & 70.1\\
&\cmark  &65.9 & 66.5 & 67.2 & 68.1 & 69.1 & 67.5 & 70.8 & 70.6 & 65.1 & 61.7 & 63.1 & 63.9 & 67.5 & 64.4 & 65.9 & 66.5\\
&Improve  &\textcolor{mycolor4}{$\bigtriangledown1.7$} & \textcolor{mycolor4}{$\bigtriangledown0.7$} & - & \textcolor{mycolor1}{$\bigtriangleup1.1$} & - &  -  & \textcolor{mycolor4}{$\bigtriangledown1.4$} & \textcolor{mycolor4}{$\bigtriangledown24.5$} & \textcolor{mycolor4}{$\bigtriangledown22.7$} & \textcolor{mycolor4}{$\bigtriangledown1.0$} & \textcolor{mycolor1}{$\bigtriangleup0.8$} & \textcolor{mycolor4}{$\bigtriangledown1.3$} & \textcolor{mycolor4}{$\bigtriangledown1.3$} & \textcolor{mycolor4}{$\bigtriangledown0.9$} & \textcolor{mycolor4}{$\bigtriangledown1.1$} & \textcolor{mycolor4}{$\bigtriangledown3.6$}\\
\midrule \multirow{3}{*}{EATA}
&\xmark   &66.7 & 67.4 & 67.1 & 67.2 & 67.4 & 66.6 & 65.4 & 62.8 & 62.8 & 63.4 & 63.3 & 63.1 & 64.2 & 64.2 & 64.2 & 65.1\\
&\cmark  &63.9 & 63.7 & 63.6 & 66.6 & 67.1 & 64.8 & 66.9 & 62.5 & 61.9 & 61.2 & 61.3 & 62.6 & 64.3 & 62.0 & 64.4 & 63.8\\
&Improve  &\textcolor{mycolor4}{$\bigtriangledown2.8$} & \textcolor{mycolor4}{$\bigtriangledown3.7$} & \textcolor{mycolor4}{$\bigtriangledown3.5$} &  -  &  -  & \textcolor{mycolor4}{$\bigtriangledown1.9$} & \textcolor{mycolor1}{$\bigtriangleup1.4$} &  -  & \textcolor{mycolor4}{$\bigtriangledown1.0$} & \textcolor{mycolor4}{$\bigtriangledown2.2$} & \textcolor{mycolor4}{$\bigtriangledown2.0$} & \textcolor{mycolor4}{$\bigtriangledown0.6$} &  -  & \textcolor{mycolor4}{$\bigtriangledown2.2$} &  -  & \textcolor{mycolor4}{$\bigtriangledown1.3$}\\
\midrule \multirow{3}{*}{SAR}
&\xmark   &64.9 & 64.2 & 65.4 & 65.6 & 66.3 & 65.4 & 67.6 & 86.9 & 71.8 & 61.5 & 62.9 & 63.6 & 65.2 & 63.5 & 65.1 & 66.7\\
&\cmark  &63.1 & 63.0 & 63.9 & 66.0 & 65.8 & 65.2 & 67.4 & 62.2 & 62.1 & 60.5 & 61.2 & 63.6 & 65.8 & 63.4 & 63.9 & 63.8\\
&Improve  &\textcolor{mycolor4}{$\bigtriangledown1.9$} & \textcolor{mycolor4}{$\bigtriangledown1.2$} & \textcolor{mycolor4}{$\bigtriangledown1.5$} &  -  &  -  &  -  &  -  & \textcolor{mycolor4}{$\bigtriangledown24.7$} & \textcolor{mycolor4}{$\bigtriangledown9.7$} & \textcolor{mycolor4}{$\bigtriangledown1.0$} & \textcolor{mycolor4}{$\bigtriangledown1.7$} &  - & \textcolor{mycolor1}{$\bigtriangleup0.6$} &  -  & \textcolor{mycolor4}{$\bigtriangledown1.2$} & \textcolor{mycolor4}{$\bigtriangledown2.9$}\\
\midrule \multirow{3}{*}{COTTA}
&\xmark   &65.6 & 66.0 & 67.8 & 70.7 & 71.5 & 67.3 & 75.3 & 66.5 & 64.5 & 65.1 & 62.9 & 70.8 & 72.3 & 64.8 & 67.1 & 67.9\\
&\cmark  &64.4 & 65.0 & 64.6 & 69.9 & 73.1 & 66.8 & 76.3 & 69.9 & 68.2 & 64.9 & 62.6 & 64.4 & 74.0 & 63.3 & 65.5 & 67.5\\
&Improve  &\textcolor{mycolor4}{$\bigtriangledown1.2$} & \textcolor{mycolor4}{$\bigtriangledown1.0$} & \textcolor{mycolor4}{$\bigtriangledown3.2$} & \textcolor{mycolor4}{$\bigtriangledown0.8$} & \textcolor{mycolor1}{$\bigtriangleup1.5$} &  -  & \textcolor{mycolor1}{$\bigtriangleup1.0$} & \textcolor{mycolor1}{$\bigtriangleup3.3$} & \textcolor{mycolor1}{$\bigtriangleup3.7$} &  - &  -  & \textcolor{mycolor4}{$\bigtriangledown6.4$} & \textcolor{mycolor1}{$\bigtriangleup1.7$} & \textcolor{mycolor4}{$\bigtriangledown1.5$} & \textcolor{mycolor4}{$\bigtriangledown1.5$} &  - \\
\midrule \multirow{3}{*}{MEMO}
&\xmark   &65.6 & 65.1 & 67.7 & 70.3 & 76.7 & 71.3 & 78.7 & 77.1 & 73.9 & 72.7 & 70.2 & 71.0 & 78.5 & 72.0 & 71.2 & 72.1\\
&\cmark  &63.0 & 63.5 & 65.5 & 69.8 & 73.4 & 69.5 & 77.9 & 74.8 & 72.8 & 71.7 & 69.3 & 70.4 & 77.7 & 71.8 & 70.6 & 70.8\\
&Improve  &\textcolor{mycolor4}{$\bigtriangledown2.7$} & \textcolor{mycolor4}{$\bigtriangledown1.5$} & \textcolor{mycolor4}{$\bigtriangledown2.2$} &  -  & \textcolor{mycolor4}{$\bigtriangledown3.3$} & \textcolor{mycolor4}{$\bigtriangledown1.8$} & \textcolor{mycolor4}{$\bigtriangledown0.9$} & \textcolor{mycolor4}{$\bigtriangledown2.3$} & \textcolor{mycolor4}{$\bigtriangledown1.1$} & \textcolor{mycolor4}{$\bigtriangledown0.9$} & \textcolor{mycolor4}{$\bigtriangledown0.9$} & \textcolor{mycolor4}{$\bigtriangledown0.6$} & \textcolor{mycolor4}{$\bigtriangledown0.7$} &  -  & \textcolor{mycolor4}{$\bigtriangledown0.6$} & \textcolor{mycolor4}{$\bigtriangledown1.3$}\\

\bottomrule
\end{tabular}
}\vspace{-5pt}
\end{table}

\subsection{Additional results w.r.t FPR (supplementary to Table \ref{tab:lifelong}) }
\label{lifelong-fprs}
We supplement the uncertainty estimation performance from Table \ref{tab:lifelong}, as shown in Table \ref{tab:life-long fprs}. According to the experimental results, our \texttt{COME} method excels in the classification task while consistently improving the uncertainty estimation performance of existing EM-based methods.
\begin{table}[htbp]
\vspace{-5pt}
\centering
\caption{Uncertainty estimation performance \textbf{(FPR)} comparison under \textbf{lifelong} TTA setting  as the full result of Table \ref{tab:lifelong}. Substantial ($\geq 0.5$) \textcolor{mycolor4}{improvement} and \textcolor{mycolor3}{degradation} compared to the baseline are highlighted in blue or red respectively.}
\label{tab:life-long fprs}
\resizebox{\textwidth}{!}{
\setlength{\tabcolsep}{1mm}
\begin{tabular}{cc|ccc|cccc|cccc|cccc|c}
\toprule
 & & \multicolumn{3}{c}{Noise} & \multicolumn{4}{c}{Blur} & \multicolumn{4}{c}{Weather} & \multicolumn{4}{c}{Digital} & \multicolumn{1}{c}{}  \\ 
\cmidrule(r){3-5} \cmidrule(r){6-9} \cmidrule(r){10-13} \cmidrule(r){14-17} \cmidrule(r){18-18}  
 Methods & UEM & Gauss. & Shot & Impul. & Defoc & Glass & Motion & Zoom & Snow & Frost &Fog & Brit. &Contr. &Elast. &Pixel & JPEG & Avg. \\

\midrule No Adapt
&\xmark   &64.6 & 65.2 & 65.8 & 66.6 & 67.9 & 67.6 & 68.7 & 69.4 & 69.2 & 67.9 & 66.1 & 72.1 & 72.5 & 71.8 & 71.1 & 68.4\\
\midrule
PL
&\xmark   &66.5 & 67.3 & 67.8 & 68.6 & 69.9 & 69.8 & 70.1 & 70.8 & 70.4 & 69.7 & 68.7 & 68.6 & 69.2 & 69.0 & 68.9 & 69.0\\

FOA
&\xmark   &65.0 & 65.5 & 65.3 & 66.0 & 66.6 & 66.3 & 66.7 & 65.9 & 65.6 & 64.7 & 63.3 & 62.9 & 63.0 & 62.9 & 62.8 & 64.8\\

\midrule \multirow{3}{*}{Tent}
&\xmark   &67.3 & 68.2 & 69.0 & 72.0 & 74.8 & 72.8 & 74.0 & 79.2 & 72.6 & 70.2 & 67.6 & 71.2 & 75.3 & 70.5 & 71.9 & 71.8\\
&\cmark  &66.9 & 66.7 & 67.0 & 67.3 & 67.8 & 67.9 & 68.1 & 67.9 & 67.5 & 67.2 & 66.2 & 66.7 & 66.6 & 66.4 & 66.3 & 67.1\\
&Improve  & - & \textcolor{mycolor4}{$\bigtriangledown1.5$} & \textcolor{mycolor4}{$\bigtriangledown2.0$} & \textcolor{mycolor4}{$\bigtriangledown4.7$} & \textcolor{mycolor4}{$\bigtriangledown6.9$} & \textcolor{mycolor4}{$\bigtriangledown5.0$} & \textcolor{mycolor4}{$\bigtriangledown5.8$} & \textcolor{mycolor4}{$\bigtriangledown11.3$} & \textcolor{mycolor4}{$\bigtriangledown5.1$} & \textcolor{mycolor4}{$\bigtriangledown3.0$} & \textcolor{mycolor4}{$\bigtriangledown1.4$} & \textcolor{mycolor4}{$\bigtriangledown4.6$} & \textcolor{mycolor4}{$\bigtriangledown8.6$} & \textcolor{mycolor4}{$\bigtriangledown4.1$} & \textcolor{mycolor4}{$\bigtriangledown5.5$} & \textcolor{mycolor4}{$\bigtriangledown4.7$}\\

\midrule \multirow{3}{*}{EATA}
&\xmark   &68.6 & 70.8 & 70.0 & 72.8 & 73.2 & 72.1 & 72.3 & 70.3 & 70.3 & 69.1 & 65.9 & 70.1 & 70.0 & 69.6 & 69.2 & 70.3\\
&\cmark  &66.8 & 67.0 & 66.9 & 67.2 & 67.4 & 67.5 & 67.6 & 67.2 & 66.9 & 66.5 & 65.8 & 66.0 & 66.0 & 65.8 & 65.8 & 66.7\\
&Improve  &\textcolor{mycolor4}{$\bigtriangledown1.8$} & \textcolor{mycolor4}{$\bigtriangledown3.8$} & \textcolor{mycolor4}{$\bigtriangledown3.1$} & \textcolor{mycolor4}{$\bigtriangledown5.6$} & \textcolor{mycolor4}{$\bigtriangledown5.9$} & \textcolor{mycolor4}{$\bigtriangledown4.6$} & \textcolor{mycolor4}{$\bigtriangledown4.7$} & \textcolor{mycolor4}{$\bigtriangledown3.0$} & \textcolor{mycolor4}{$\bigtriangledown3.4$} & \textcolor{mycolor4}{$\bigtriangledown2.6$} &  - & \textcolor{mycolor4}{$\bigtriangledown4.2$} & \textcolor{mycolor4}{$\bigtriangledown4.0$} & \textcolor{mycolor4}{$\bigtriangledown3.8$} & \textcolor{mycolor4}{$\bigtriangledown3.4$} & \textcolor{mycolor4}{$\bigtriangledown3.6$}\\
\midrule \multirow{3}{*}{SAR}
&\xmark   &65.5 & 65.7 & 65.5 & 67.1 & 68.0 & 67.2 & 69.3 & 76.1 & 92.8 & 100.0 & 99.6 & 75.7 & 100.0 & 100.0 & 99.9 & 80.8\\
&\cmark  &65.0 & 64.5 & 65.0 & 65.8 & 66.1 & 66.3 & 66.7 & 66.4 & 66.1 & 65.9 & 65.1 & 65.8 & 65.7 & 65.5 & 65.3 & 65.7\\
&Improve  & -  & \textcolor{mycolor4}{$\bigtriangledown1.2$} & -  & \textcolor{mycolor4}{$\bigtriangledown1.2$} & \textcolor{mycolor4}{$\bigtriangledown1.8$} & \textcolor{mycolor4}{$\bigtriangledown0.9$} & \textcolor{mycolor4}{$\bigtriangledown2.6$} & \textcolor{mycolor4}{$\bigtriangledown9.8$} & \textcolor{mycolor4}{$\bigtriangledown26.7$} & \textcolor{mycolor4}{$\bigtriangledown34.0$} & \textcolor{mycolor4}{$\bigtriangledown34.4$} & \textcolor{mycolor4}{$\bigtriangledown9.9$} & \textcolor{mycolor4}{$\bigtriangledown34.3$} & \textcolor{mycolor4}{$\bigtriangledown34.5$} & \textcolor{mycolor4}{$\bigtriangledown34.6$} & \textcolor{mycolor4}{$\bigtriangledown15.1$}\\
\midrule \multirow{3}{*}{COTTA}
&\xmark   &65.2 & 66.7 & 67.7 & 70.0 & 72.4 & 71.5 & 74.9 & 70.3 & 69.0 & 69.1 & 67.6 & 69.9 & 73.8 & 69.9 & 70.1 & 69.9\\
&\cmark  &64.4 & 64.4 & 64.6 & 65.0 & 65.4 & 65.5 & 66.6 & 66.5 & 66.2 & 66.1 & 65.7 & 66.0 & 66.3 & 66.2 & 66.3 & 65.7\\
&Improve  &\textcolor{mycolor4}{$\bigtriangledown0.9$} & \textcolor{mycolor4}{$\bigtriangledown2.3$} & \textcolor{mycolor4}{$\bigtriangledown3.1$} & \textcolor{mycolor4}{$\bigtriangledown5.1$} & \textcolor{mycolor4}{$\bigtriangledown7.0$} & \textcolor{mycolor4}{$\bigtriangledown6.0$} & \textcolor{mycolor4}{$\bigtriangledown8.4$} & \textcolor{mycolor4}{$\bigtriangledown3.8$} & \textcolor{mycolor4}{$\bigtriangledown2.8$} & \textcolor{mycolor4}{$\bigtriangledown3.0$} & \textcolor{mycolor4}{$\bigtriangledown1.9$} & \textcolor{mycolor4}{$\bigtriangledown3.9$} & \textcolor{mycolor4}{$\bigtriangledown7.5$} & \textcolor{mycolor4}{$\bigtriangledown3.7$} & \textcolor{mycolor4}{$\bigtriangledown3.8$} & \textcolor{mycolor4}{$\bigtriangledown4.2$}\\

\bottomrule
\end{tabular}
}\vspace{-5pt}
\end{table}

\vspace{5em}

\subsection{Additional results w.r.t. open-world TTA setting.(supplementary to Table \ref{tab:openworld})}
\label{open-world-0.3}
We add comparisons of classification and uncertainty estimation under \textbf{open-world} TTA settings with different mix ratios. As observed under $P^{\rm test}=0.7 P^{\rm Cov}+0.3 P^{\rm Sem}$ setting, our \texttt{COME} method consistently enhances the performance of both classification and uncertainty estimation tasks when applied to existing EM-based methods.
\begin{table}[!htbp]
\vspace{-5pt}
    \centering
    \caption{Classification and uncertainty estimation comparisons under \textbf{open-world} TTA settings, , where $P^{\rm test}=0.7 P^{\rm Cov}+0.3 P^{\rm Sem}$(Gaussian noise of severity level 3) and a suit of diverse abnormal outliers as same with Table \ref{tab:openworld}.}
    \label{tab:add-openworld}
    \resizebox{\textwidth}{!}{
    \setlength{\tabcolsep}{1.75mm}
    \begin{tabular}{cc|cc|cc|cc|cc|cc|cc|cc}
    \toprule
      & & \multicolumn{2}{c}{None} & \multicolumn{2}{c}{NINCO} & \multicolumn{2}{c}{iNaturist} & \multicolumn{2}{c}{SSB-Hard} & \multicolumn{2}{c}{Texture} & \multicolumn{2}{c}{Places}  & \multicolumn{2}{c}{Avg.} \\ \midrule Method&COME & Acc$\uparrow$ &FPR$\downarrow$ & Acc$\uparrow$ &FPR$\downarrow$ & Acc$\uparrow$ &FPR$\downarrow$ & Acc$\uparrow$ &FPR$\downarrow$ & Acc$\uparrow$ &FPR$\downarrow$ & Acc$\uparrow$ &FPR$\downarrow$ & Acc$\uparrow$ &FPR$\downarrow$ \\
      \midrule 
No Adapt
& \xmark &64.4&63.7&64.2&68.2&64.4&67.9&64.4&70.2&63.9&63.6&64.4&59.5&64.3&65.5\\
\midrule 
PL
& \xmark &69.0&63.3&66.5&68.0&69.2&66.9&68.7&71.9&65.7&64.0&68.8&58.0&68.0&65.4\\

T3A
& \xmark &64.4&71.4&64.1&76.1&64.2&73.4&64.2&77.9&63.7&72.5&64.2&71.2&64.1&73.7\\

TEA
& \xmark &64.0&63.5&61.4&69.6&63.8&71.2&63.0&75.7&61.4&66.4&64.0&63.8&62.9&68.4\\

LAME
& \xmark &64.1&63.8&63.9&68.9&64.1&69.6&64.1&70.7&63.6&65.2&64.2&61.8&64.0&66.7\\

FOA
& \xmark &67.8&62.4&66.8&66.7&67.6&65.1&67.7&71.4&66.5&63.1&67.7&56.1&67.3&64.1\\

\midrule \multirow{3}{*}{Tent}
&\xmark   &70.7&63.1&68.0&67.4&70.8&67.1&70.4&72.8&67.6&64.8&70.3&59.2&69.6&65.7\\
&\cmark  &72.6&64.9&70.2&62.3&72.6&63.2&72.7&67.8&69.6&59.1&72.0&48.4&71.6&61.0\\
&Improve  &\textcolor{mycolor4}{$\bigtriangleup1.9$} &\textcolor{mycolor1}{$\bigtriangleup1.8$} &\textcolor{mycolor4}{$\bigtriangleup2.2$} &\textcolor{mycolor4}{$\bigtriangledown5.1$} &\textcolor{mycolor4}{$\bigtriangleup1.8$} &\textcolor{mycolor4}{$\bigtriangledown3.9$} &\textcolor{mycolor4}{$\bigtriangleup2.3$} &\textcolor{mycolor4}{$\bigtriangledown5.0$} &\textcolor{mycolor4}{$\bigtriangleup2.1$} &\textcolor{mycolor4}{$\bigtriangledown5.7$} &\textcolor{mycolor4}{$\bigtriangleup1.7$} &\textcolor{mycolor4}{$\bigtriangledown10.7$} &\textcolor{mycolor4}{$\bigtriangleup2.0$} &\textcolor{mycolor4}{$\bigtriangledown4.8$}\\

\midrule \multirow{3}{*}{EATA}
&\xmark   &70.4&63.6&67.9&67.6&70.4&69.2&70.2&73.6&67.6&64.4&69.9&62.2&69.4&66.8\\
&\cmark  &73.3&63.3&71.4&60.2&73.3&63.2&73.1&68.0&71.9&58.2&73.0&49.4&72.7&60.4\\
&Improve  &\textcolor{mycolor4}{$\bigtriangleup2.9$} & - &\textcolor{mycolor4}{$\bigtriangleup3.4$} &\textcolor{mycolor4}{$\bigtriangledown7.3$} &\textcolor{mycolor4}{$\bigtriangleup2.9$} &\textcolor{mycolor4}{$\bigtriangledown6.0$} &\textcolor{mycolor4}{$\bigtriangleup2.9$} &\textcolor{mycolor4}{$\bigtriangledown5.6$} &\textcolor{mycolor4}{$\bigtriangleup4.3$} &\textcolor{mycolor4}{$\bigtriangledown6.2$} &\textcolor{mycolor4}{$\bigtriangleup3.1$} &\textcolor{mycolor4}{$\bigtriangledown12.8$} &\textcolor{mycolor4}{$\bigtriangleup3.3$} &\textcolor{mycolor4}{$\bigtriangledown6.4$}\\

\midrule \multirow{3}{*}{SAR}
&\xmark   &69.0&62.6&66.3&67.6&68.0&67.8&68.0&73.5&66.7&62.3&67.9&59.9&67.6&65.6\\
&\cmark  &73.1&62.1&70.8&62.6&73.3&65.2&73.5&68.4&70.6&59.0&73.1&52.9&72.4&61.7\\
&Improve  &\textcolor{mycolor4}{$\bigtriangleup4.1$} &\textcolor{mycolor4}{$\bigtriangledown0.6$} &\textcolor{mycolor4}{$\bigtriangleup4.5$} &\textcolor{mycolor4}{$\bigtriangledown5.0$} &\textcolor{mycolor4}{$\bigtriangleup5.3$} &\textcolor{mycolor4}{$\bigtriangledown2.6$} &\textcolor{mycolor4}{$\bigtriangleup5.6$} &\textcolor{mycolor4}{$\bigtriangledown5.1$} &\textcolor{mycolor4}{$\bigtriangleup3.8$} &\textcolor{mycolor4}{$\bigtriangledown3.3$} &\textcolor{mycolor4}{$\bigtriangleup5.3$} &\textcolor{mycolor4}{$\bigtriangledown6.9$} &\textcolor{mycolor4}{$\bigtriangleup4.8$} &\textcolor{mycolor4}{$\bigtriangledown3.9$}\\

\midrule \multirow{3}{*}{COTTA}
&\xmark   &67.6&64.1&65.8&68.3&69.0&67.6&69.0&71.9&65.3&64.3&68.7&59.9&67.6&66.0\\
&\cmark  &70.5&63.0&67.3&65.7&71.5&67.8&71.7&72.3&66.9&62.3&70.9&55.6&69.8&64.5\\
&Improve  &\textcolor{mycolor4}{$\bigtriangleup2.9$} &\textcolor{mycolor4}{$\bigtriangledown1.0$} &\textcolor{mycolor4}{$\bigtriangleup1.5$} &\textcolor{mycolor4}{$\bigtriangledown2.6$} &\textcolor{mycolor4}{$\bigtriangleup2.5$} & - &\textcolor{mycolor4}{$\bigtriangleup2.6$} & -  &\textcolor{mycolor4}{$\bigtriangleup1.6$} &\textcolor{mycolor4}{$\bigtriangledown1.9$} &\textcolor{mycolor4}{$\bigtriangleup2.1$} &\textcolor{mycolor4}{$\bigtriangledown4.3$} &\textcolor{mycolor4}{$\bigtriangleup2.2$} &\textcolor{mycolor4}{$\bigtriangledown1.5$}\\

    \bottomrule
    \end{tabular}
    }\vspace{-5pt}
    \end{table}

\subsection{Additional performance visualization of comparison results.(supplementary to Figure \ref{fig:motivation2})}
\label{visualization}
We provide more results regarding comparison on two representative TTA methods, i.e., the seminal Tent~\citep{wang2020tent} and recent SOTA SAR~\citep{niu2023towards}. Our \texttt{COME} method consistently improves the performance of both classification and uncertainty estimation tasks across various corruptions, where the proposed \texttt{COME} explicitly maintains conservative predictive confidence during TTA.
\begin{figure}[htbp]
    \vspace{-2pt}
    \centering
    \includegraphics[width=0.99\textwidth]{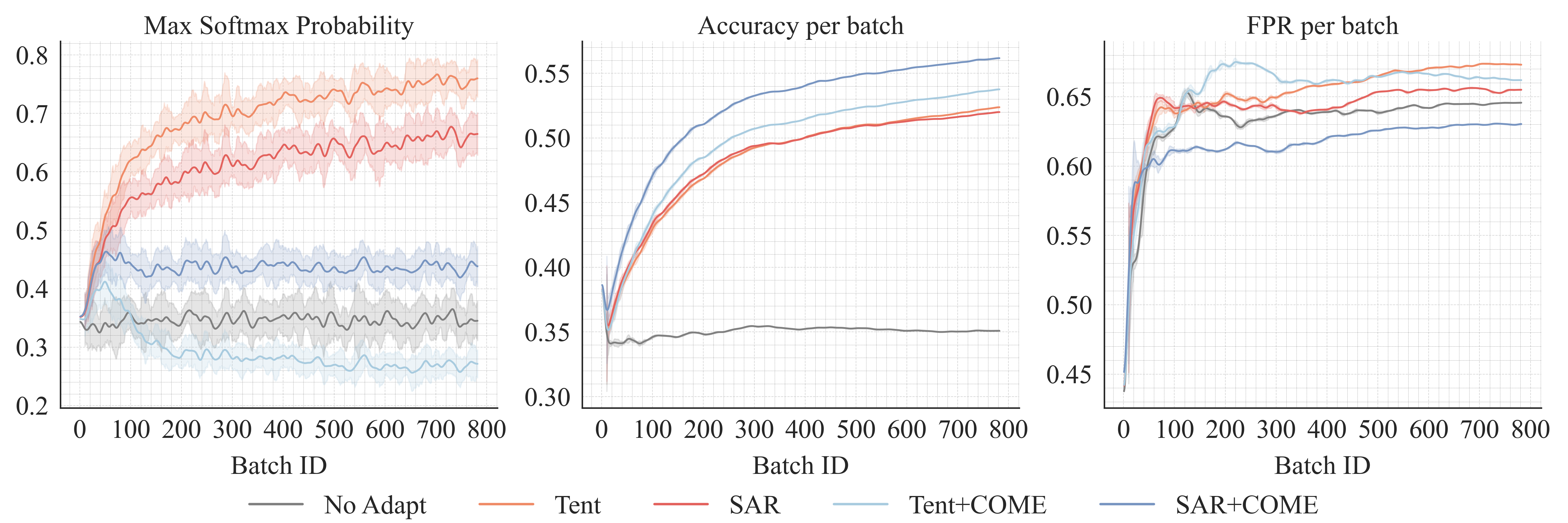}
    \caption{Comparison on two representative TTA methods on ImageNet-C under \textbf{Gaussian Noise} corruption of severity level 5. By contrast to EM, our \texttt{COME} establishes a stable TTA process with consistently improved classification accuracy and false positive rate.}
    \label{fig:gaussian_noise}
    \vspace{-10pt}
\end{figure}

\begin{figure}[htbp]
    \vspace{-2pt}
    \centering
    \includegraphics[width=0.99\textwidth]{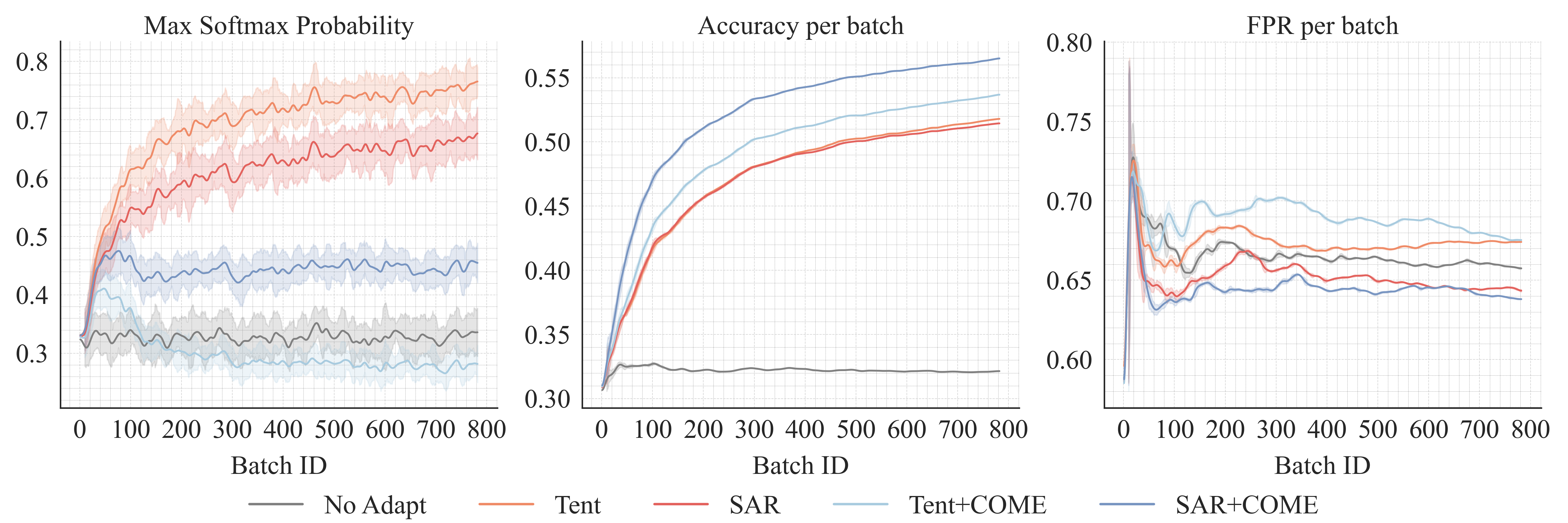}
    \caption{Comparison on two representative TTA methods on ImageNet-C under \textbf{Shot Noise} corruption of severity level 5. By contrast to EM, our \texttt{COME} establishes a stable TTA process with consistently improved classification accuracy and false positive rate.}
    \label{fig:shot_noise}
    \vspace{-10pt}
\end{figure}

\begin{figure}[htbp]
    \vspace{-2pt}
    \centering
    \includegraphics[width=0.99\textwidth]{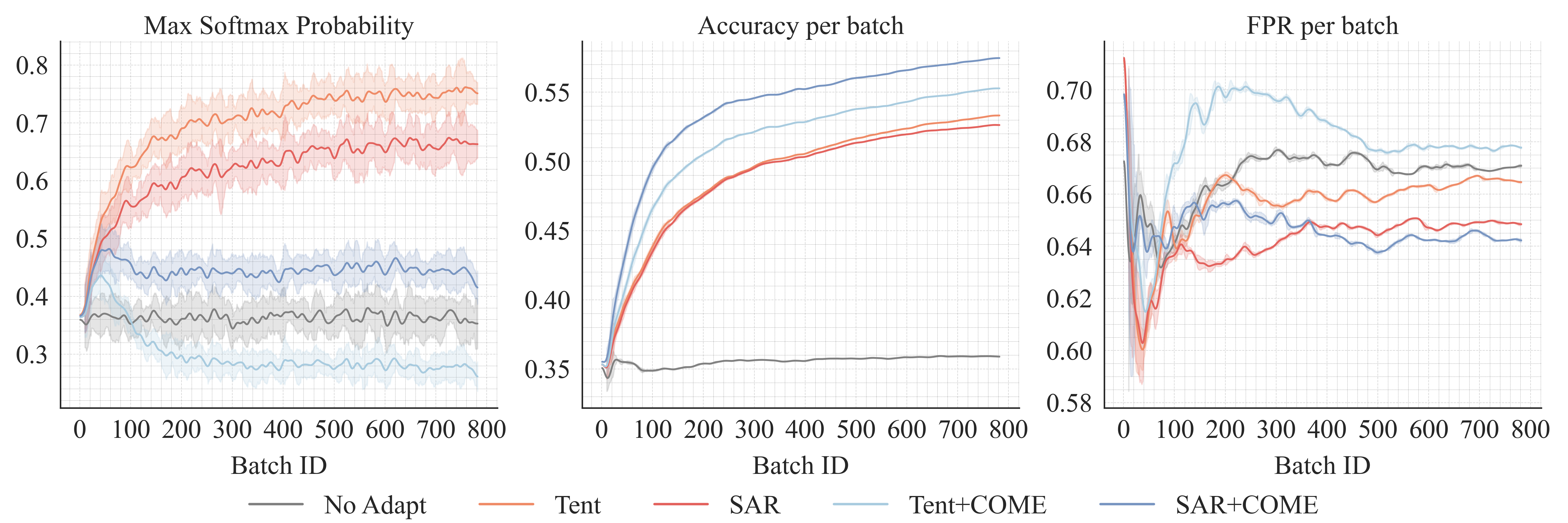}
    \caption{Comparison on two representative TTA methods on ImageNet-C under \textbf{Impulse Noise} corruption of severity level 5. By contrast to EM, our \texttt{COME} establishes a stable TTA process with consistently improved classification accuracy and false positive rate.}
    \label{fig:impulse_noise}
    \vspace{-10pt}
\end{figure}

\begin{figure}[htbp]
    \vspace{-2pt}
    \centering
    \includegraphics[width=0.99\textwidth]{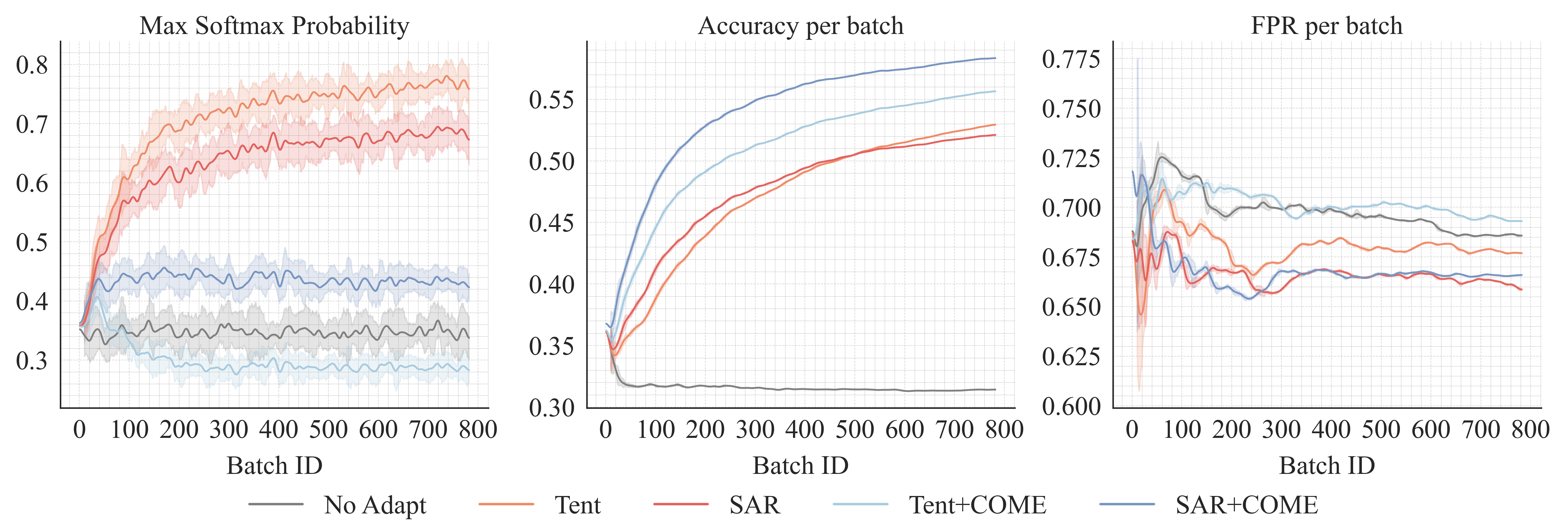}
    \caption{Comparison on two representative TTA methods on ImageNet-C under \textbf{Defocus Blur} corruption of severity level 5. By contrast to EM, our \texttt{COME} establishes a stable TTA process with consistently improved classification accuracy and false positive rate.}
    \label{fig:defocus_blur}
    \vspace{-10pt}
\end{figure}

\begin{figure}[htbp]
    \vspace{-2pt}
    \centering
    \includegraphics[width=0.99\textwidth]{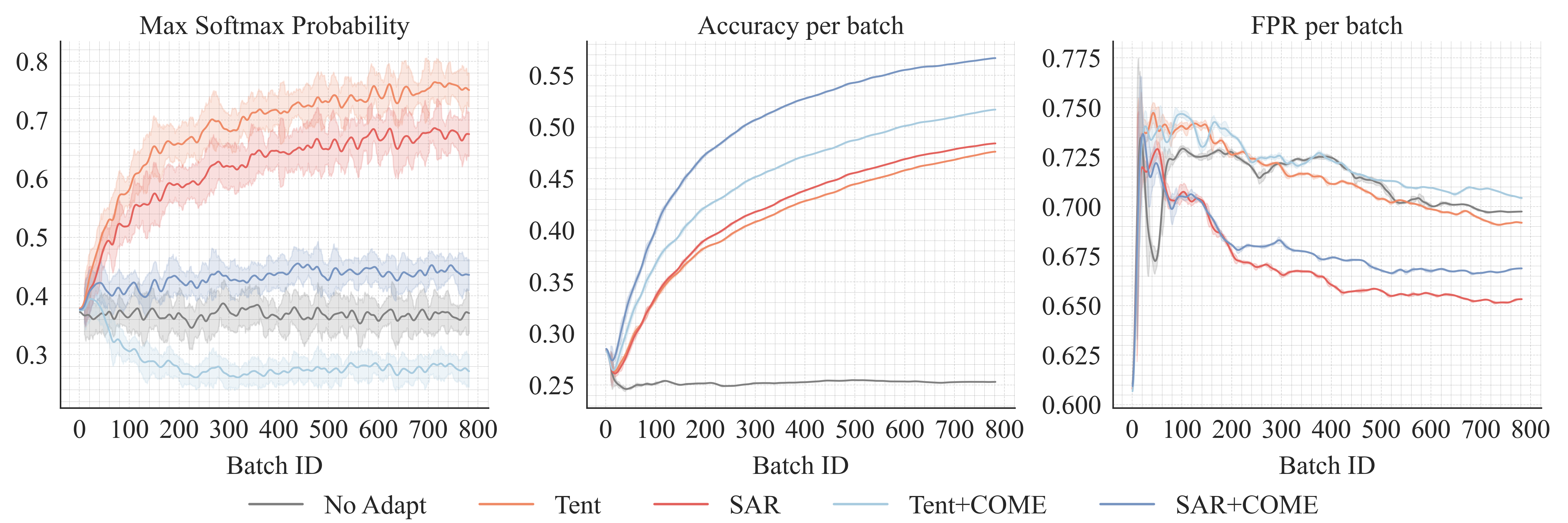}
    \caption{Comparison on two representative TTA methods on ImageNet-C under \textbf{Glass Blur} corruption of severity level 5. By contrast to EM, our \texttt{COME} establishes a stable TTA process with consistently improved classification accuracy and false positive rate.}
    \label{fig:glass_blur}
    \vspace{-10pt}
\end{figure}

\begin{figure}[htbp]
    \vspace{-2pt}
    \centering
    \includegraphics[width=0.99\textwidth]{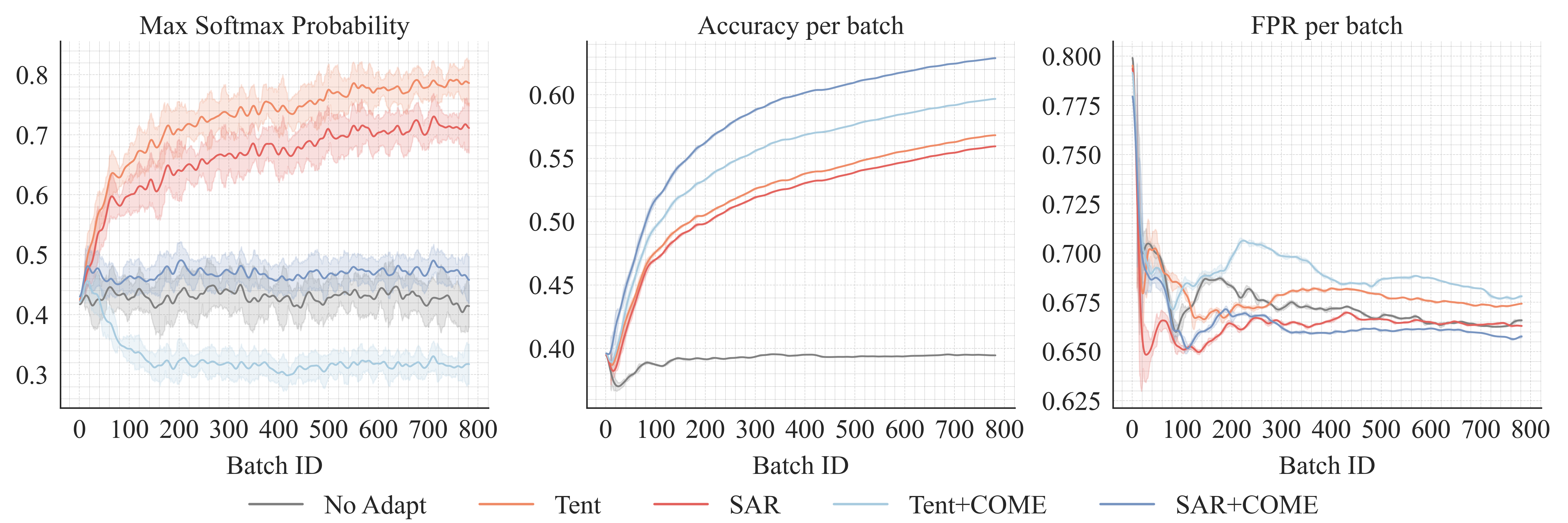}
    \caption{Comparison on two representative TTA methods on ImageNet-C under \textbf{Motion Blur} corruption of severity level 5. By contrast to EM, our \texttt{COME} establishes a stable TTA process with consistently improved classification accuracy and false positive rate.}
    \label{fig:motion_blur}
    \vspace{-10pt}
\end{figure}

\begin{figure}[htbp]
    \vspace{-2pt}
    \centering
    \includegraphics[width=0.99\textwidth]{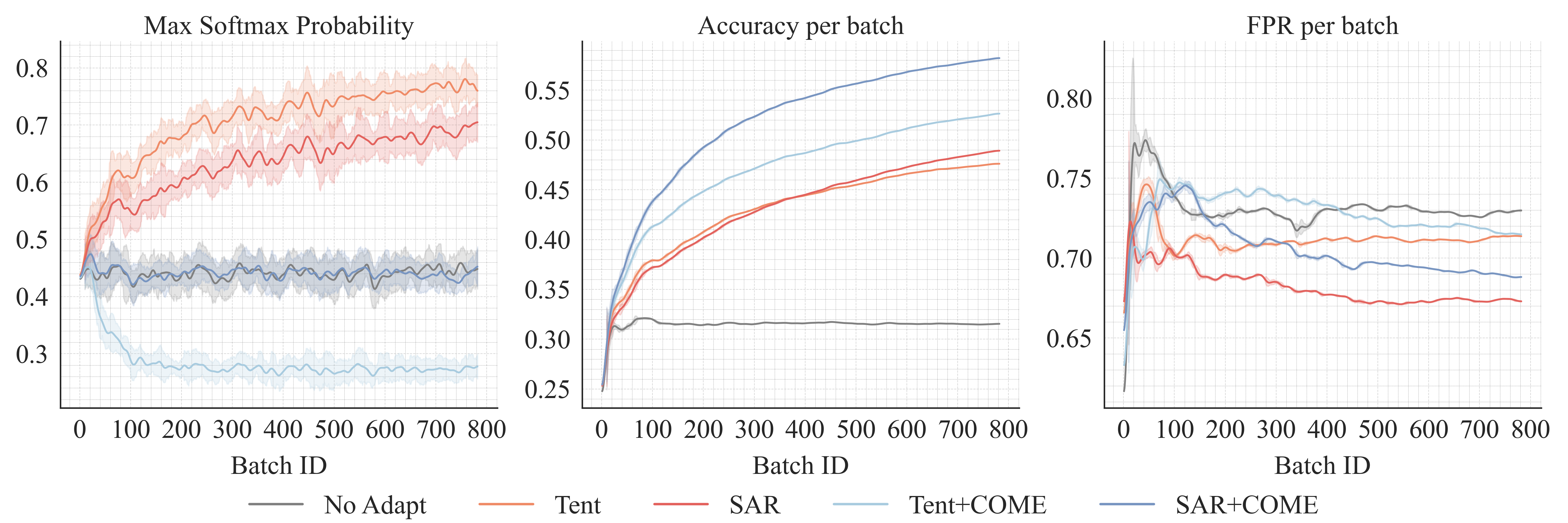}
    \caption{Comparison on two representative TTA methods on ImageNet-C under \textbf{Zoom Blur} corruption of severity level 5. By contrast to EM, our \texttt{COME} establishes a stable TTA process with consistently improved classification accuracy and false positive rate.}
    \label{fig:zoom_blur}
    \vspace{-10pt}
\end{figure}

\begin{figure}[htbp]
    \vspace{-2pt}
    \centering
    \includegraphics[width=0.99\textwidth]{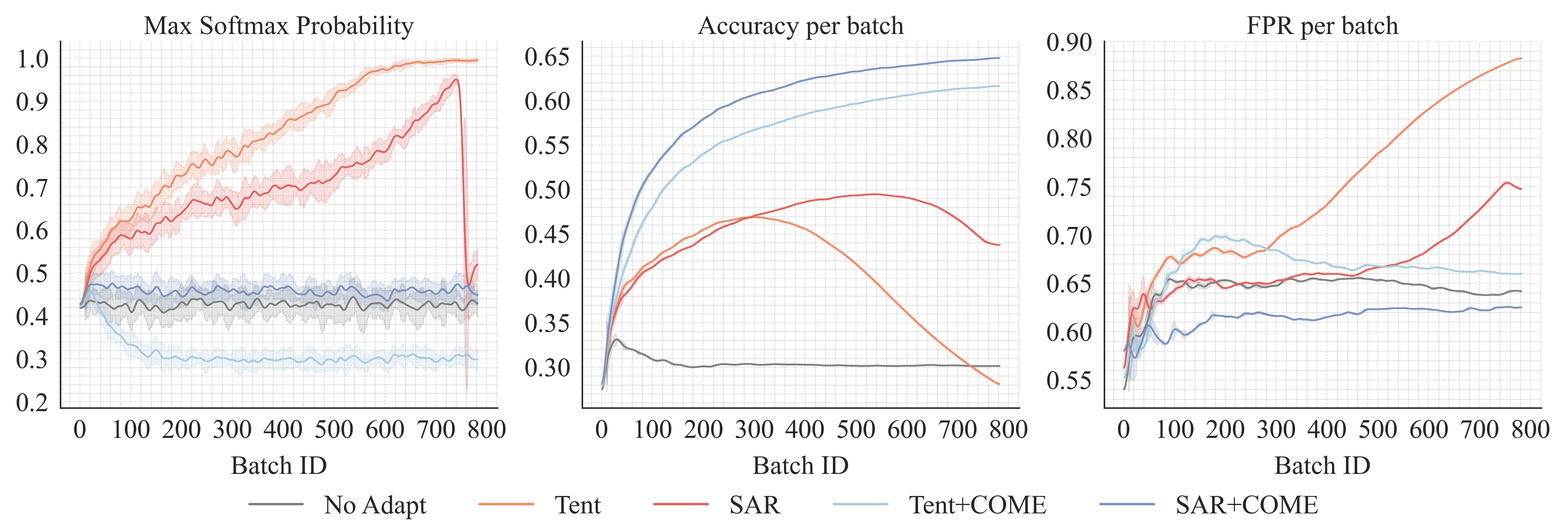}
    \caption{Comparison on two representative TTA methods on ImageNet-C under \textbf{Frost} corruption of severity level 5. By contrast to EM, our \texttt{COME} establishes a stable TTA process with consistently improved classification accuracy and false positive rate. Although the SAR method can recover the model when it collapses to a trivial solution, its performance remains poor. Our \texttt{COME} method addresses the issue of overconfidence that leads to model collapse.}
    \label{fig:frost}
    \vspace{-10pt}
\end{figure}

\begin{figure}[htbp]
    \vspace{-2pt}
    \centering
    \includegraphics[width=0.99\textwidth]{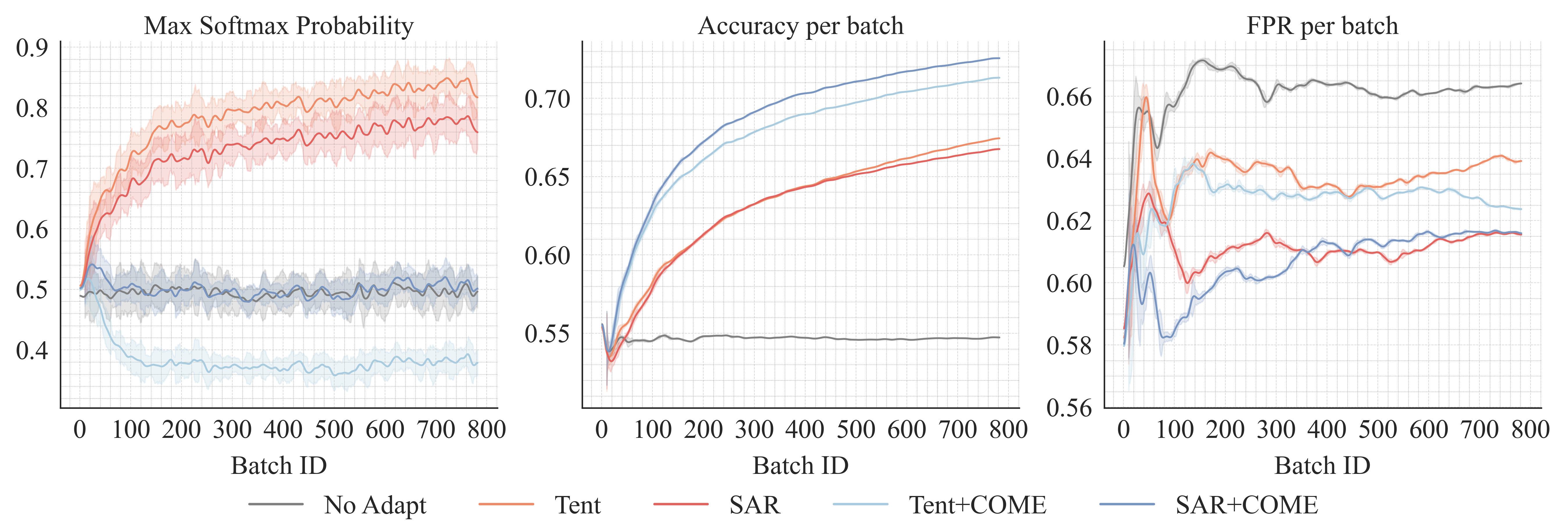}
    \caption{Comparison on two representative TTA methods on ImageNet-C under \textbf{Fog} corruption of severity level 5. By contrast to EM, our \texttt{COME} establishes a stable TTA process with consistently improved classification accuracy and false positive rate.}
    \label{fig:fog}
    \vspace{-10pt}
\end{figure}

\begin{figure}[htbp]
    \vspace{-2pt}
    \centering
    \includegraphics[width=0.99\textwidth]{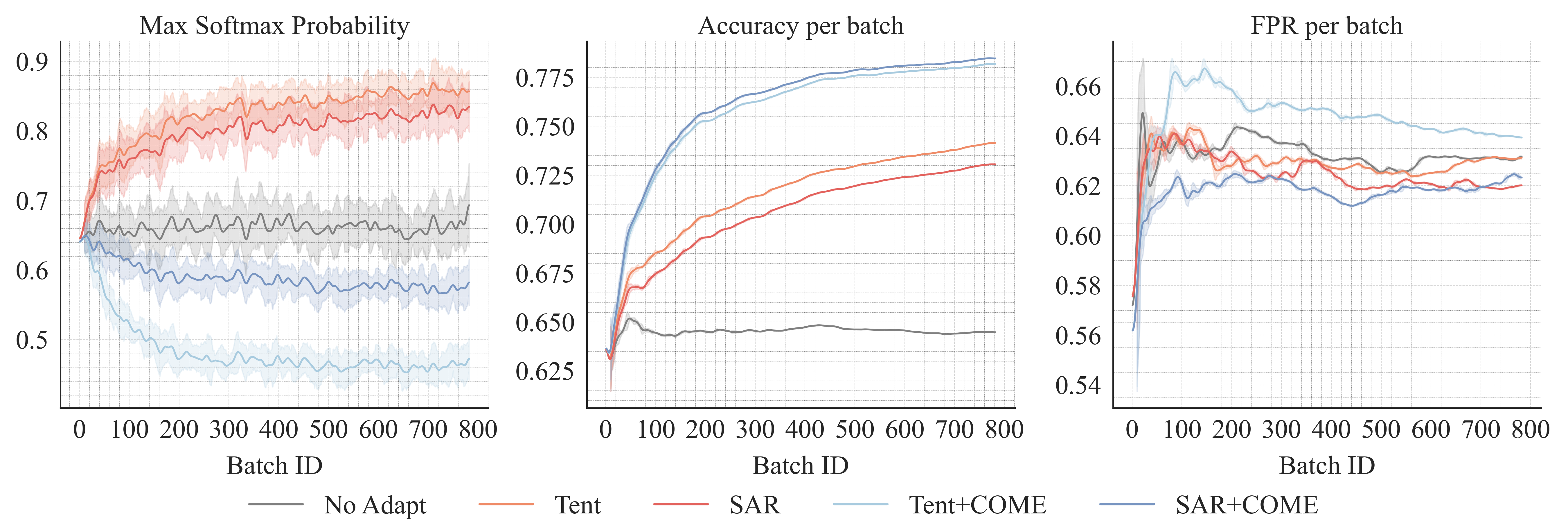}
    \caption{Comparison on two representative TTA methods on ImageNet-C under \textbf{Brightness} corruption of severity level 5. By contrast to EM, our \texttt{COME} establishes a stable TTA process with consistently improved classification accuracy and false positive rate.}
    \label{fig:brightness}
    \vspace{-10pt}
\end{figure}

\begin{figure}[htbp]
    \vspace{-2pt}
    \centering
    \includegraphics[width=0.99\textwidth]{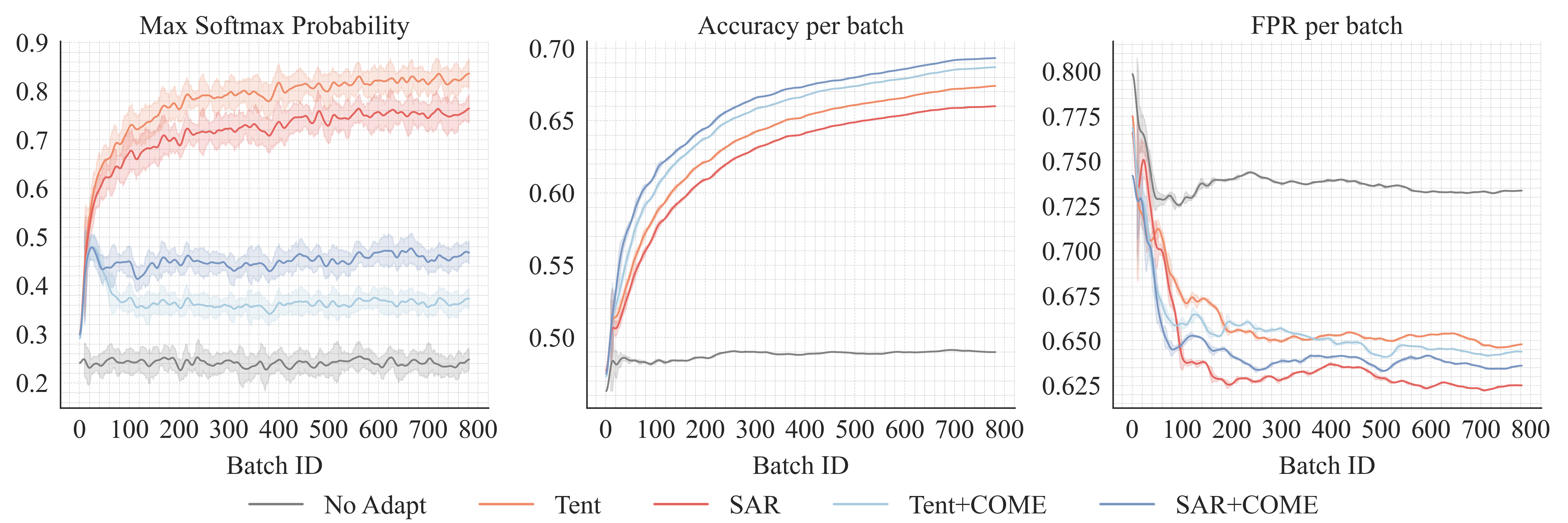}
    \caption{Comparison on two representative TTA methods on ImageNet-C under \textbf{Contrast} corruption of severity level 5. By contrast to EM, our \texttt{COME} establishes a stable TTA process with consistently improved classification accuracy and false positive rate.}
    \label{fig:contrast}
    \vspace{-10pt}
\end{figure}

\begin{figure}[htbp]
    \vspace{-2pt}
    \centering
    \includegraphics[width=0.99\textwidth]{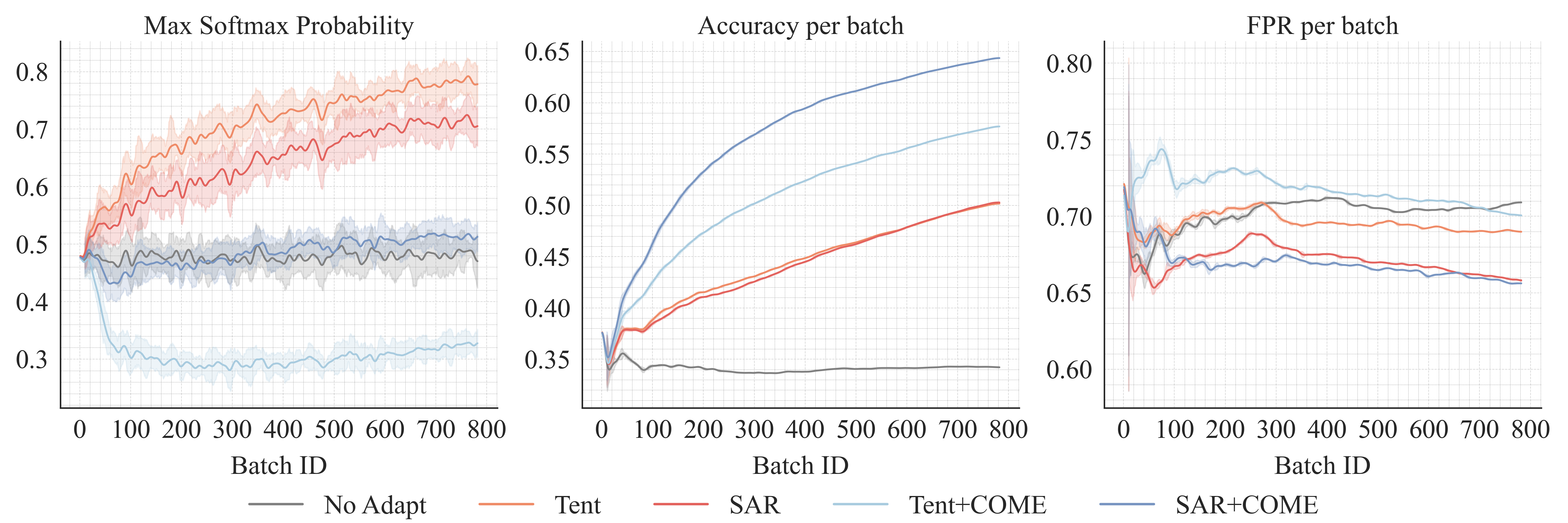}
    \caption{Comparison on two representative TTA methods on ImageNet-C under \textbf{Elastic Transform} corruption of severity level 5. By contrast to EM, our \texttt{COME} establishes a stable TTA process with consistently improved classification accuracy and false positive rate.}
    \label{fig:elastic_transform}
    \vspace{-10pt}
\end{figure}

\begin{figure}[htbp]
    \vspace{-2pt}
    \centering
    \includegraphics[width=0.99\textwidth]{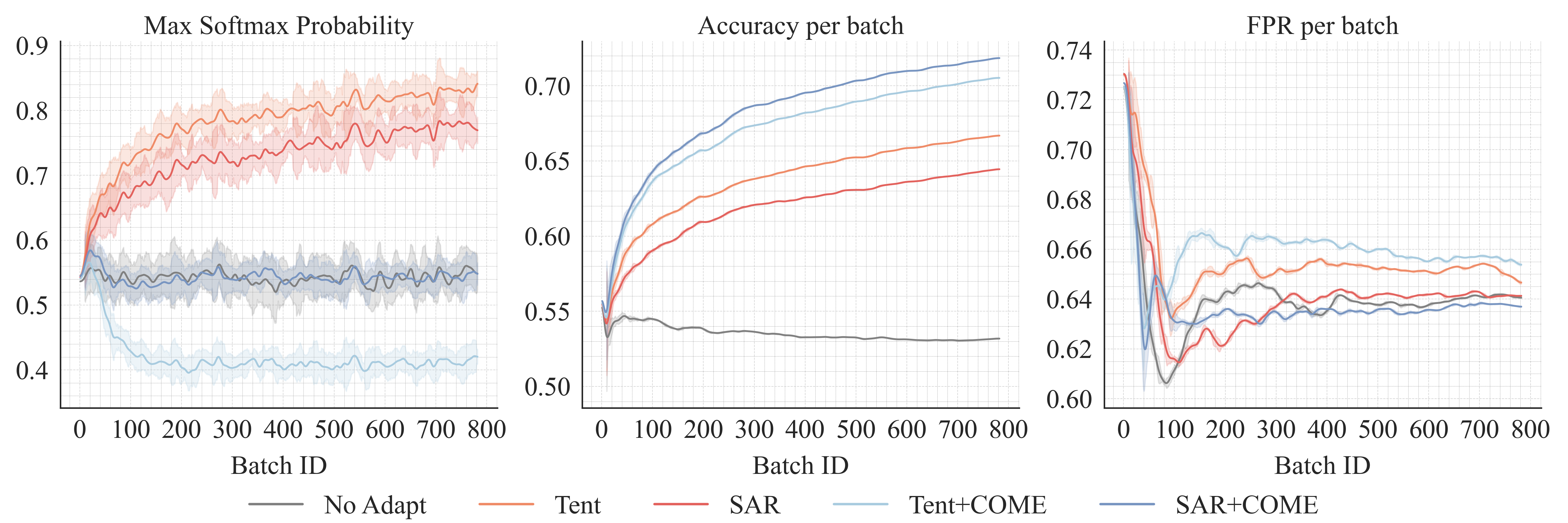}
    \caption{Comparison on two representative TTA methods on ImageNet-C under \textbf{Pixelate} corruption of severity level 5. By contrast to EM, our \texttt{COME} establishes a stable TTA process with consistently improved classification accuracy and false positive rate.}
    \label{fig:pixelate}
    \vspace{-10pt}
\end{figure}

\begin{figure}[htbp]
    \vspace{-2pt}
    \centering
    \includegraphics[width=0.99\textwidth]{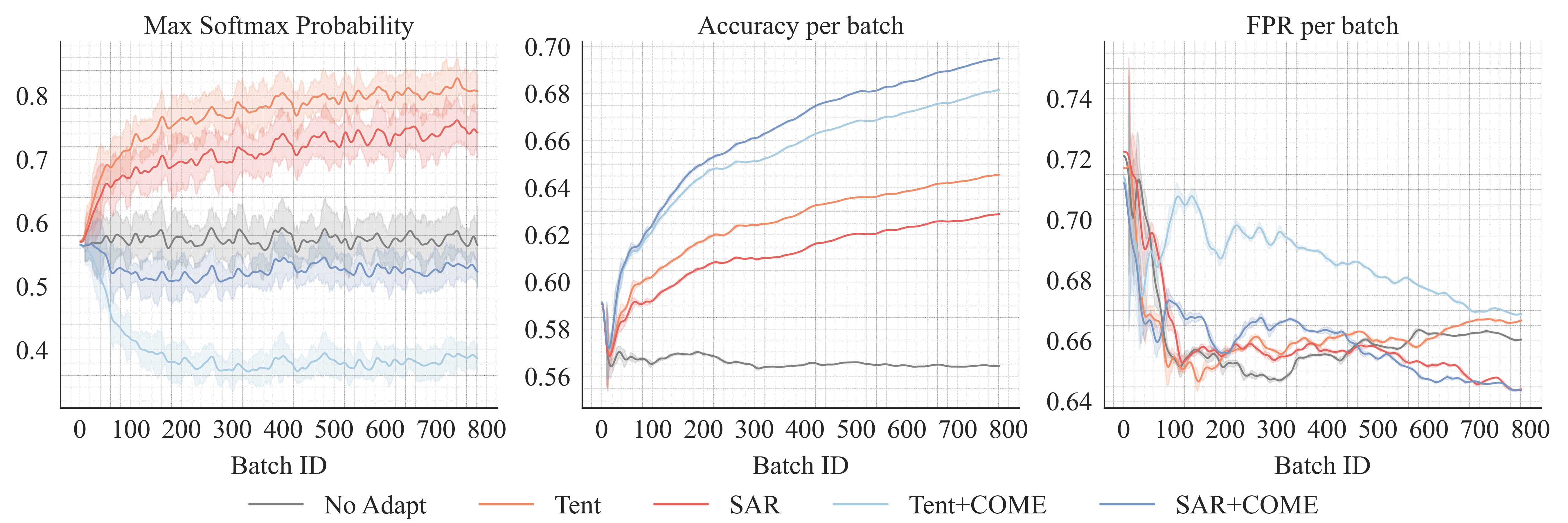}
    \caption{Comparison on two representative TTA methods on ImageNet-C under \textbf{Jpeg Compression} corruption of severity level 5. By contrast to EM, our \texttt{COME} establishes a stable TTA process with consistently improved classification accuracy and false positive rate.}
    \label{fig:jpeg_compression}
    \vspace{-10pt}
\end{figure}

\section{Discussion}

\subsection{Effects of Design Choice}
\textbf{Choices of output function to obtain the parameters of the Dirichlet distribution.} By definitions, the parameters of a Dirichlet distribution $\boldsymbol{\alpha}$ must be greater than 1 and the evidence $\boldsymbol{e}$ should be non-negative. This can be achieved by applying ReLU activation function or exponential function to the output logits as suggested in previous works~\citep{han2022trusted,malinin2018predictive}. That is, we can get the evidence by
\begin{equation}
    \boldsymbol{e}={\rm ReLU}(f(x))
\end{equation}
or
\begin{equation}
    \boldsymbol{e}=\exp f(x)-1.
\end{equation}
In this paper, we choose the exponential function. Since we assume the pretrain model is trained with standard cross-entropy loss, using exponential function to get the evidence can keep the training strategy unchanged. Besides, based on our early empirical findings, using exponential function can achieve better classification performance compared to ReLU.

We refer interested readers to~\citep{malinin2018predictive} and Gal's PhD Thesis~\citep{gal2016uncertainty} for more detailed implementation instructions and math deviations.

\textbf{Choices of uncertainty constraint.} In Lemma 1, we prove that by constraining on the model output logits, we can control the uncertainty mass $u$ not to diverge too far from the pretrained model. Previous work~\citep{wei2022mitigating} proposes to mitigate the overconfidence issue by normalizing the logits during pretrain progress in supervised learning tasks. Following their implementation, we propose to optimize on the direction vector of $f(x)$, i.e., $f(x)/||f(x)||_p$, and thus we can expect that the optimization progress is not related to the magnitude of $f(x)$, i.e., its norm. Different from~\cite{wei2022mitigating}, we recover the magnitude by multiplying the direction vector with its norm (detached), rather than a constant to avoid an additional hyperparameter.

\subsection{Limitations and Future work}
Many state-of-the-art TTA methods are equipped with entropy minimization learning principle;, but the potential pitfalls lie in this optimization objective is not well understood. In this paper, we provide empirical analysis towards understanding the failure mode. These findings motivate us to further explore the connection between uncertainty learning and reliable TTA progress, which further implies a principle to design novel learning principle as an alternative to entropy minimization. Finally, we perform extensive experiments on multiple benchmarks to support our findings. In the work, a simple yet effective regularization on the uncertainty mass is devised, and other regularization techniques could be explored. Another interesting direction is further explore the relationship between overconfidence issue and model collapse theoretically.

\end{document}